\documentclass[11pt]{article}

\usepackage[utf8]{inputenc}
\usepackage[T1]{fontenc}
\usepackage{lmodern}

\usepackage{amsmath}
\usepackage{amssymb}
\usepackage{amsfonts}
\usepackage{amsthm}
\usepackage{bm}
\usepackage{mathtools}

\usepackage{graphicx}
\usepackage{subcaption}
\usepackage{booktabs}
\usepackage{multirow}
\usepackage{array}

\usepackage{xcolor}
\usepackage{enumitem}

\usepackage{algorithm}
\usepackage{algpseudocode}

\usepackage{url}
\usepackage{hyperref}
\usepackage[nameinlink,noabbrev]{cleveref}

\usepackage[numbers]{natbib}

\usepackage[margin=1in]{geometry}

\usepackage{microtype}

\newtheorem{theorem}{Theorem}

\theoremstyle{definition}

\theoremstyle{remark}

\title{
Understanding Submodular Information Measure Based Objectives for Representation Learning:\
A Variance and Separation Perspective
}

\usepackage{authblk}

\author[1]{Rishabh Iyer}
\author[1]{Truong Pham}
\author[2]{Anay Majee}

\affil[1]{The University of Texas at Dallas, Richardson, TX, USA}
\affil[2]{Adobe, San Jose, CA, USA}

\date{}

\begin{document}

\maketitle

\begin{abstract}

Submodular Information Measures (SIMs) based objectives and loss functions have recently emerged as a powerful framework for representation learning and multi-modal learning. In particular, the SCORE framework~\cite{majee2024score} demonstrated that SIMs can serve as effective objectives for supervised contrastive representation learning, yielding strong performance across balanced and long-tailed recognition settings. Despite these empirical successes, the geometric and statistical properties induced by different submodular information measures remain poorly understood. In particular, it is unclear what notions of compactness, separation, diversity, and coverage are optimized by different SIM formulations.

In this work, we develop a unified theoretical framework connecting submodular information measures to classical concepts in representation learning and statistical pattern recognition. We show that Total Information (TI) formulations based on diversity-oriented submodular functions naturally recover measures of intra-class variability. In particular, Graph Cut based TI objectives correspond to classical within-class variance, while LogDet based TI objectives recover generalized variance and covariance volume. We further show that Mutual Information (MI) formulations capture complementary notions of inter-class structure. Graph Cut based MI objectives are closely related to centroid separation and Fisher-style discriminative criteria, while Facility Location based MI objectives quantify representational overlap and modal coverage between classes. LogDet based MI objectives capture covariance-aware separation related to Mahalanobis and generalized discriminant distances.

To validate these characterizations, we design a suite of controlled synthetic experiments that independently manipulate variance, covariance structure, class imbalance, class separation, and multimodal overlap. Across all settings, the empirical behavior of the corresponding information measures closely matches the proposed theoretical interpretations. Our analysis reveals that different submodular information measures induce fundamentally different geometric biases, ranging from variance reduction and covariance control to coverage and multimodal representation. These results provide the first unified understanding of SIMs for representation learning and offer principled guidance for selecting and designing future submodular objectives.

\end{abstract}

\section{Introduction}

Learning effective representations is a central problem in modern machine learning. Deep neural networks trained with cross-entropy objectives have achieved remarkable success across vision, language, and multimodal tasks \citep{celoss,resnet,vgg,alexnet}. More recently, contrastive and metric-learning approaches such as SimCLR \citep{simclr2020}, MoCo \citep{moco}, Supervised Contrastive Learning \citep{supcon2020}, ArcFace \citep{arcface}, CosFace \citep{cosface}, Triplet Loss \citep{triplet}, and Center Loss \citep{centerloss} have demonstrated that explicitly shaping the geometry of the representation space can substantially improve generalization, robustness, and transferability. Despite their diversity, these methods share a common objective: learning embeddings that simultaneously encourage intra-class compactness and inter-class separation.

The importance of balancing compactness and separation can be traced back to classical discriminant analysis \citep{fisher1936}, where the quality of a representation is characterized by the ratio between between-class and within-class scatter. Modern representation learning methods instantiate this principle through a variety of objectives, including pairwise similarity losses \citep{triplet,lsl,n_pairs}, angular-margin losses \citep{arcface,cosface}, contrastive objectives \citep{supcon2020,simclr2020}, covariance regularization \citep{bardes2021vicreg}, and neighborhood-preservation criteria \citep{snn}. While these objectives differ significantly in their formulation, they all seek to control the geometry of representations through notions of variance, covariance, separation, and diversity.

In parallel, a distinct line of work has explored submodular optimization as a principled framework for modeling representation, diversity, coverage, and information \citep{wei15_subset,fujishige2005submodular,iyer2015submodular, kaushal2019demystifying, Kaushal_2019}. Submodular functions have been widely applied to data subset selection, summarization, active learning, and representation learning \citep{submod_diversity,kaushal2019demystifying,similar,prism,orient}. A particularly powerful family of objectives is provided by submodular information measures (SIMs), which generalize classical notions of mutual information through submodular set functions \citep{alt_gen_submod}. SIMs have enabled targeted data selection \citep{similar,talisman}, distribution-shift-aware subset selection \citep{orient}, and more recently representation learning and multi-modal learning \citep{majee2024score, majee2024smile}.

SCORE \citep{majee2024score} demonstrated that submodular information measures can serve as effective objectives for supervised representation learning. By constructing positive and negative sets from class labels and maximizing appropriate information measures between them, SCORE achieved competitive performance across multiple representation-learning benchmarks. SMILE~\cite{majee2024smile} extended the formulation of SCORE to few-shot representation learning, and showed that using the submodular mutual information based objectives can enable learning of new classes without forgetting known classes. In~\cite{majee2026looking}, the authors proposed a framework called CROWD that used the objectives from SCORE and SMILE for open world learning problems, while \cite{majee2026shasam} proposed a new family of loss functions for hard sample mining. Finally, \cite{pham2026sma} proposed the \emph{Submodular Modality Aligner}, a new family of objective functions for multi-modal alignment. 
However, despite its empirical success, a fundamental question remains unanswered: \emph{what geometric properties of the embedding space are induced by different submodular information measures?}.  In SCORE~\cite{majee2024score}, SMILE~\cite{majee2024smile}, SHaSaM~\cite{majee2026shasam}, and SMA~\cite{pham2026sma}, various variants of Total Submodular Information (TI) and Submodular Mutual Information (SMI) variants have been proposed -- including Facility Location TI, Log-Det TI, Graph Cut Mutual Information (GCMI), Facility Location Mutual Information (FLQMI), LogDet Mutual Information and their variants \citep{prism,orient,alt_gen_submod}. However, there is currently little understanding of the representation-learning biases encoded by these objectives.

This lack of understanding creates an important gap between theory and practice. Existing work largely treats different SIMs as interchangeable objective functions, selecting them empirically based on downstream performance~\cite{majee2024score, majee2024smile, majee2026shasam, pham2026sma}. Yet our experiments reveal that different SIMs induce fundamentally different geometric structures. For example, Graph Cut based objectives are closely related to variance-separation criteria and favor global class separation, while Facility Location based objectives emphasize coverage and multimodal representation. LogDet-based objectives, in contrast, explicitly capture covariance structure and generalized variance. These differences become especially pronounced in settings involving multimodal classes, long-tailed distributions, and heterogeneous semantic concepts, where distinct SIMs exhibit markedly different behaviors.

In this paper, we develop the first systematic theoretical and empirical study of submodular information measures for representation learning. We establish explicit connections between several widely used SIMs and classical statistical quantities including variance, covariance, generalized variance, class separation, and coverage. We show that Graph Cut based measures recover variance-separation objectives analogous to Fisher-style discriminant criteria, while Facility Location based measures provide coverage-oriented objectives that better capture multimodal class structure. We further characterize the regimes in which these objectives agree and diverge, and derive theoretical results explaining their behavior under class imbalance and multimodal distributions.

Our analysis yields both theoretical insights and practical guidance. Through synthetic studies, visualization experiments, and representation-learning benchmarks, we demonstrate that no single SIM is universally optimal. Instead, different information measures encode different inductive biases that make them suitable for different representation-learning regimes. These results provide a principled foundation for selecting and designing submodular information measures for representation learning and establish new connections between submodular optimization, information theory, and modern representation learning.

\paragraph{Contributions.}
Our primary contributions are:

\begin{enumerate}

\item \textbf{A unified geometric theory of Submodular Information Measures.}
We develop a theoretical framework connecting Submodular Information Measures (SIMs) to classical concepts in statistical pattern recognition and representation learning. Our analysis reveals that different SIMs recover fundamentally different notions of intra-class variability and inter-class separation.

\item \textbf{Characterization of Total Information objectives.}
We show that Graph Cut Total Information recovers classical within-class variance, LogDet Total Information recovers generalized variance through covariance volume, and Facility Location Total Information induces imbalance-aware separation margins that naturally emphasize rare and confusable classes.

\item \textbf{Characterization of Mutual Information objectives.}
We establish that Graph Cut Mutual Information is intimately connected to aggregate centroid separation and show that the combined GC-TI and GCMI objective exactly recovers the classical aggregate mean-separation criterion. We further show that LogDet Mutual Information recovers Mahalanobis-style covariance-aware separation, while Facility Location Mutual Information captures nearest-mode representational overlap in multimodal settings.

\item \textbf{A unified view of SCORE objectives.}
Our results provide the first theoretical explanation of the representation-learning biases induced by the SCORE framework, revealing how different combinations of Total Information and Mutual Information objectives recover distinct classical discriminative criteria.

\item \textbf{Comprehensive empirical validation.}
We design a suite of controlled synthetic experiments that independently manipulate variance, covariance structure, class imbalance, centroid separation, and multimodal overlap. Across all settings, the empirical behavior of the corresponding information measures closely matches the proposed theoretical characterizations.

\end{enumerate}

%
\section{Background and Preliminaries}

In this section, we briefly review representation learning, submodular functions, submodular information measures, and the SCORE framework that motivates our analysis.

\subsection{Representation Learning}

Let $\mathcal{D}={(x_i,y_i)}_{i=1}^{N}$ denote a labeled dataset with $K$ classes, where $y_i \in {1,\ldots,K}$. Given an encoder $f*\theta$, each sample is mapped to a representation

\begin{equation}
z_i = f_\theta(x_i) \in \mathbb{R}^{d}.
\end{equation}

A central objective in supervised representation learning is to learn embeddings that are both compact within a class and well separated across classes. Classical approaches such as Fisher's Linear Discriminant Analysis \citep{fisher1936} formalize this principle through within-class and between-class scatter matrices, while modern metric-learning and contrastive-learning methods achieve similar goals through pairwise, triplet, or contrastive objectives \citep{triplet,centerloss,simclr2020,supcon2020,arcface,cosface}.

For a class $C_c$, let

\begin{equation}
\mu_c = \frac{1}{|C_c|}\sum_{i\in C_c} z_i
\end{equation}

denote the class centroid. Throughout the paper, we will connect submodular information measures to classical notions of:

\begin{itemize}
\item \textbf{Intra-class variability}, which measures how dispersed samples are within a class.
\item \textbf{Inter-class separation}, which measures how distinct different classes are from one another.
\item \textbf{Coverage and representation}, which quantify how well one set of samples represents another.
\end{itemize}

\subsection{Submodular Functions}

Let $V$ denote a finite ground set. A set function $f:2^V \rightarrow \mathbb{R}$ is \emph{submodular} if it satisfies the diminishing returns property: $f(A \cup {j}) - f(A) \ge f(B \cup {j}) - f(B)$, for all $A \subseteq B \subseteq V$ and $j \notin B$. Submodular functions naturally model notions of representation, diversity, and coverage and have been extensively used in subset selection, summarization, active learning, and information maximization \citep{fujishige2005submodular,iyer2015submodular,submod_diversity}. Throughout this work, we focus on four commonly used submodular functions (Graph Cut, Facility Location, and Log Determinant):

\paragraph{Graph Cut (GC).}
Graph Cut captures a tradeoff between representation and diversity and is defined over a similarity matrix $S$ as

\begin{equation}
f_{\mathrm{GC}}(A) = 
\lambda \sum_{i\in A}\sum_{j\in V}s_{ij} - 
\sum_{i,j\in A}s_{ij}.
\end{equation}

\paragraph{Facility Location (FL).}
Facility Location is a representation-oriented function that measures how well a subset represents the ground set:

\begin{equation}
f_{\mathrm{FL}}(A) = \sum_{i\in V}
\max_{j\in A}
s_{ij}.
\end{equation}

\paragraph{LogDet (LD).}
LogDet is a diversity-oriented function based on the determinant of a similarity kernel. It is closely related to determinantal point processes and generalized variance \citep{Kulesza_2012}.

\begin{equation}
f_{\mathrm{LD}}(A) = \log \det(S_A).
\end{equation}

These functions represent distinct notions of diversity, coverage, and representation, and form the basis of the information measures~\cite{alt_gen_submod} studied in this paper.

\subsection{Submodular Information Measures}

Submodular Information Measures (SIMs) generalize classical information-theoretic quantities using submodular set functions \citep{alt_gen_submod}. Given a submodular function $f$, the corresponding submodular mutual information between two sets $A$ and $B$ is defined as

\begin{equation}
I_f(A;B) = f(A) + f(B) - f(A \cup B).
\end{equation}

Different choices of $f$ induce different notions of similarity and dependence. For example, Graph Cut Mutual Information (GCMI), Facility Location Mutual Information (FLQMI), and LogDet Mutual Information (LogDetMI) arise from choosing Graph Cut, Facility Location, and LogDet functions respectively \citep{similar,prism,orient}.

Complementary to MI is the notion of Total Information (TI), which measures the aggregate dependence among multiple sets:

\begin{equation}
TI_f(C_1,\ldots,C_K) = \sum_{c=1}^{K} f(C_c)
\end{equation}

Throughout this work, we analyze mainly the TI and MI formulations and show that they capture distinct geometric properties of the representation space.

\subsection{SCORE: Submodular Combinatorial Representation Learning}

Recently, Majee et al.~\citep{majee2024score} proposed SCORE, a supervised representation learning framework based on submodular information measures. SCORE constructs positive and negative sets using class labels and optimizes submodular objectives directly in the embedding space.

Given an embedding function $f_\theta$, SCORE defines a similarity matrix over representations and computes the Total Information (TI) and Total Correlation (TC) variants. Different choices of the underlying submodular function lead to different learning objectives, including Graph Cut, Facility Location, and LogDet variants.

Empirically, SCORE demonstrated that submodular information measures provide effective objectives for supervised representation learning and achieve strong performance across balanced, imbalanced, and long-tailed recognition settings. However, despite these empirical successes, the geometric and statistical properties induced by different information measures remain poorly understood.

The goal of this paper is to provide a unified theoretical understanding of these objectives. In particular, we ask:

\begin{quote}
What notions of variability, separation, and coverage are optimized by different submodular information measures?
\end{quote}

By connecting SIMs to classical statistical quantities such as variance, covariance volume, mean separation, and modal coverage, we develop a principled understanding of the representation-learning biases induced by different submodular objectives.

\section{A Unified Geometric View of Submodular Information Measures}

The central goal of this paper is to understand the geometric biases induced by different submodular information measures. While prior work has demonstrated that Graph Cut, Facility Location, LogDet, and Coverage-based information measures are effective for representation learning \citep{similar,prism,orient,majee2024score}, there is currently little understanding of the statistical quantities they optimize.

In this section, we introduce the key geometric quantities studied throughout the paper, present a generic formulation of SCORE, and summarize the main theoretical results established in subsequent sections.

\subsection{Variance, Separation, and Coverage}
Consider a collection of classes ${C_1,\ldots,C_K}$, where each class $C_c$ consists of embedded samples
$z_i \in \mathbb{R}^d$. Let $\frac{1}{|C_c|}
\sum_{i\in C_c}
z_i$ denote the centroid of class $C_c$, and let $\frac{1}{|C_c|}
\sum_{i\in C_c}
(z_i-\mu_c)(z_i-\mu_c)^T$ denote its covariance matrix. A central theme of this paper is that there is no unique notion of either intra-class variability or inter-class separation. Instead, different objectives emphasize different geometric properties of the representation space. We therefore begin by introducing several classical quantities that will later be connected to specific submodular information measures.

\subsubsection{Quantifying Intra-Class Structure}
The most common notion of intra-class variability is the within-class variance: 

\begin{align}
    \text{Intra-Class-Var}(C_1, \cdots, C_k) = \sum_{c=1}^{K}
\sum_{i\in C_c}
|z_i-\mu_c|^2 = \frac{1}{2}
\sum_{c=1}^{K}
\frac{1}{|C_c|}
\sum_{i,j\in C_c}
|z_i-z_j|^2
\end{align}
The second equality holds using standard variance identities.  Thus, within-class variance may be interpreted either through distances to class centroids or through average pairwise distances within a class.

A more general notion of variability is provided by the generalized variance:
\begin{align}
    \text{Intra-Class-GenVar}(C_1, \cdots, C_k) = \sum_{c=1}^{K}
\log \det(\Sigma_c)
\end{align} 
which captures not only the magnitude of variation but also the covariance structure and effective dimensionality of each class.

\subsubsection{Quantifying Inter-Class Structure}

\paragraph{Mean-Based Inter-Class Separation.}
A classical notion of inter-class separation is the pairwise centroid separation:
\begin{align}
\text{Mean-Separation}_{MB}(C_1, \cdots, C_k) = \sum_{i<j}
|\mu_i-\mu_j|^2
\end{align}
This quantity forms the basis of between-class scatter in Fisher's discriminant analysis and measures how far apart class centroids are in the embedding space.

\paragraph{Covariance-Aware Inter-Class Separation.}

A second notion of separation accounts for covariance structure through Mahalanobis distances:
\begin{align}
\text{Cov-Separation}(C_1, \cdots, C_k) = \sum_{i<j}
(\mu_i-\mu_j)^T
\Sigma^{-1}
(\mu_i-\mu_j)
\end{align}
where $\Sigma$ denotes an appropriate pooled covariance matrix. Unlike centroid separation, this quantity measures separation relative to the variability of the classes and therefore captures discriminability rather than distance alone.

\paragraph{Representational Overlap.}

A third notion of inter-class structure is representational rather than geometric. Instead of asking whether class centroids are far apart, one may ask whether the modes of one class can be represented by another class.

If overlap occurs across many modes, the two classes exhibit high representational overlap. If overlap is confined to only a few modes, the classes remain distinguishable despite potentially similar centroid locations.

As we show later, Facility Location based mutual information naturally captures this notion of representational overlap.

\subsection{A Generic SCORE Objective}

A central observation underlying SCORE~\citep{majee2024score} is that effective representation learning requires simultaneously controlling both intra-class variability and inter-class bias. The former determines the compactness and diversity of individual classes, while the latter governs the relationships among different classes.

This perspective naturally motivates the use of submodular information measures within a representation learning framework. Let $\mathcal{C}
=
\{C_1,\ldots,C_K\}$ denote the collection of classes and let $I_f(\cdot;\cdot)$ denote a submodular mutual information measure induced by a submodular function $f$.

A generic SCORE-style objective can be written as

\begin{equation}
\label{eq:score_generic}
\mathcal{L}_{\mathrm{SCORE}}
=
\lambda_{\mathrm{TI}}
TI_f(C_1,\ldots,C_K)
+
\lambda_{\mathrm{SMI}}
\sum_{i<j}
I_f(C_i;C_j),
\end{equation}
where $\lambda_{\mathrm{TI}}$ and $\lambda_{\mathrm{SMI}}$ control the relative importance of the two terms.
In the original SCORE paper~\cite{majee2024score}, the first term (TI) was proposed while the second term was used for few-shot representation learning in~\cite{majee2024smile}. In this paper, we analyze the combined objective, and we can get back the objective from \cite{majee2024score} by setting $\lambda_{\mathrm{SMI}}$ as 0.

This decomposition naturally separates representation learning into two components:

\begin{equation}
\text{SCORE}
=
\underbrace{\text{TI}}_{\text{Multi-Class Information}}
+
\underbrace{\text{SMI}}_{\text{Pairwise Class Information}}.
\label{eq:score_decomp}
\end{equation}
A key observation of this paper is that neither TI nor SMI admits a universal geometric interpretation. Rather, the quantities captured by these objectives depend fundamentally on the choice of the underlying submodular function.

Broadly speaking, submodular functions used in representation learning can be categorized according to whether they emphasize representation, diversity, or a combination of both.

\begin{itemize}

\item \textbf{Representation-oriented functions} emphasize how well one set represents another. Examples include Facility Location and Saturated Coverage.

\item \textbf{Diversity-oriented functions} emphasize dispersion among elements. Examples include LogDet and dispersion-based objectives.

\item \textbf{Hybrid functions} simultaneously contain representation and diversity components. The most prominent example is Graph Cut,

\begin{equation}
f_{\mathrm{GC}}(A)
=
\lambda
\sum_{i\in A}
\sum_{j\in V}
s_{ij}
-
\sum_{i,j\in A}
s_{ij},
\end{equation}

where the first term is representation-oriented while the second term is diversity-oriented.

\end{itemize}

This distinction is important because the geometric interpretation of TI and SMI depends strongly on the underlying submodular function. For diversity-oriented functions such as Graph Cut and LogDet, the TI formulations will be shown to recover classical notions of intra-class variability, including pairwise variance and generalized variance. In contrast, for representation-oriented functions such as Facility Location and Saturated Coverage, the TI formulations induce quantities that are closely related to inter-class representation and bias.

Similarly, different SMI functions induce different notions of inter-class structure. As we show later, Graph Cut Mutual Information is closely related to centroid separation, LogDet Mutual Information captures covariance-aware separation, while Facility Location Mutual Information captures representational overlap between classes.

Consequently, the geometric behavior of a SCORE objective is determined jointly by two factors:

\begin{enumerate}
    \item The information measure being optimized (TI or SMI).
    \item The representation-versus-diversity characteristics of the underlying submodular function.
\end{enumerate}

The primary goal of this paper is to characterize the notions of intra-class variability and inter-class bias induced by different combinations of these two factors.

\subsection{Summary of Main Results}
The central question studied in this paper is:

\begin{quote}
What notions of intra-class variability and inter-class bias are induced by different submodular information measures?
\end{quote}

Our analysis reveals that the behavior of a SCORE objective is governed jointly by the information measure being optimized (TI or SMI) and the representation-versus-diversity characteristics of the underlying submodular function. In particular, we establish the following results:
\begin{itemize}
\item \textbf{Graph Cut TI and intra-class variance.}
We show that Graph Cut based Total Information admits a decomposition into inter-class bias and intra-class variance terms. Under suitable parameterizations, the diversity component recovers classical within-class variance and its equivalent pairwise-distance formulation.

\item \textbf{LogDet TI and generalized variance.}
We show that LogDet based Total Information recovers generalized variance and covariance volume, thereby capturing both the magnitude and covariance structure of within-class variability.

\item \textbf{Representation-oriented TI objectives and inter-class bias.}
Unlike diversity-based objectives, representation-oriented functions such as Facility Location and Saturated Coverage induce Total Information formulations that depend explicitly on interactions between classes, yielding notions of inter-class representation and bias rather than classical intra-class variability.

\item \textbf{Graph Cut Mutual Information and mean separation.}
We establish a direct connection between Graph Cut Mutual Information and classical notions of centroid separation and between-class scatter.

\item \textbf{LogDet Mutual Information and covariance-aware separation.}
We show that LogDet Mutual Information captures covariance-aware notions of separation and admits connections to Mahalanobis and Fisher-style discriminative criteria.

\item \textbf{Facility Location Mutual Information and representational overlap.}
We show that Facility Location based mutual information induces a distinct notion of representational overlap, quantifying the extent to which the modes of one class are represented by another.
\end{itemize}

Together, these results provide a unified geometric interpretation of submodular information measures and explain the distinct representation-learning biases induced by different SCORE objectives. Table~\ref{tab:summary-tabls} provides a clean summary of the different objectives (TI and MI) and the respective variance and bias quantities recovered. 

\begin{table}[t]
\centering
\caption{Summary of the geometric biases induced by different Submodular Information Measures.}
\label{tab:summary-tabls}
\small
\begin{tabular}{lll}
\toprule
Measure & Category & Quantity Recovered \\
\midrule
GC-TI & Intra-Class & Within-class variance \\
LogDet-TI & Intra-Class & Generalized variance \\
FL-TI & Inter/Intra-Class & Rare-class margin expansion \\
\midrule
GCMI & Inter-Class & Aggregate overlap / centroid separation \\
GC-TI + GCMI & Inter + Intra Class & Aggregate mean separation \\
LogDetMI & Inter-Class & Mahalanobis separation \\
FLQMI & Inter-Class & Nearest-mode overlap \\
\bottomrule
\end{tabular}
\end{table}
\section{Total Information Objectives}
\label{sec:ti}

In this section, we study Total Information (TI) objectives and characterize the geometric quantities induced by different submodular functions. Recall that for a collection of classes $\mathcal{C} = 
{C_1,\ldots,C_K}$, the Total Information induced by a submodular function $f$ is given by

\begin{equation}
TI_f(C_1,\ldots,C_K) = 
\sum_{c=1}^{K}
f(C_c).
\end{equation}

Unlike Submodular Mutual Information, which explicitly models interactions between pairs of classes, Total Information aggregates the contribution of each class independently. Nevertheless, as we show below, the resulting objective need not be purely intra-class in nature. Depending on the underlying submodular function, TI objectives can induce both intra-class variability terms and inter-class bias terms.

A key observation of this paper is that the behavior of a TI objective depends strongly on whether the underlying submodular function is diversity-oriented, representation-oriented, or a combination of both. Diversity-based functions naturally recover classical notions of intra-class variability, while representation-based functions induce interactions between classes and therefore capture notions of inter-class representation and bias.

In the remainder of this section, we analyze four widely-used submodular functions: (i) Graph Cut (GC), (ii) LogDet (LD), (iii) Facility Location (FL), and (iv) Saturated Coverage (SatCov).

For each function, we derive the corresponding Total Information objective and connect it to classical statistical quantities arising in representation learning and pattern recognition.

\subsection{Graph Cut Total Information}
\label{sec:gcti}

We begin with Graph Cut, which occupies a unique position among submodular functions because it simultaneously contains representation and diversity components.

Recall that the Graph Cut function is defined as

\begin{equation}
f_{\mathrm{GC}}(A)
=
\lambda
\sum_{i\in A}
\sum_{j\in V}
s_{ij}
-
\sum_{i,j\in A}
s_{ij},
\end{equation}

where $S=[s_{ij}]$ denotes a similarity matrix and $\lambda \in [0,1]$ controls the balance between representation and diversity.

Substituting this expression into the Total Information objective yields

\begin{align}
TI_{\mathrm{GC}}
=
\sum_{c=1}^{K}
f_{\mathrm{GC}}(C_c) \
=
\lambda
\sum_{c=1}^{K}
\sum_{k\in C_c}
\sum_{l\in V}
s_{kl}
-
\sum_{c=1}^{K}
\sum_{k,l\in C_c}
s_{kl}.
\label{eq:gcti_basic}
\end{align}

Separating the first term into within-class and cross-class contributions gives

\begin{align}
TI_{\mathrm{GC}}
&=
\lambda
\sum_{c=1}^{K}
\sum_{k\in C_c}
\left(
\sum_{l\in C_c}
s_{kl}
+
\sum_{l\in V\setminus C_c}
s_{kl}
\right)
-
\sum_{c=1}^{K}
\sum_{k,l\in C_c}
s_{kl} \\
&=
\lambda
\sum_{c=1}^{K}
\sum_{k\in C_c}
\sum_{l\in V\setminus C_c}
s_{kl}
-
(1-\lambda)
\sum_{c=1}^{K}
\sum_{k,l\in C_c}
s_{kl}.
\label{eq:gcti_decomposition}
\end{align}

Equation~(\ref{eq:gcti_decomposition}) reveals an important decomposition. The first term measures interactions between a class and all points outside the class and therefore induces a notion of inter-class bias. The second term depends only on pairwise similarities within each class and therefore captures intra-class variability.

This decomposition is particularly interesting because it mirrors the classical representation learning objective of simultaneously maximizing inter-class separation while minimizing intra-class variability. Unlike purely diversity-oriented or purely representation-oriented functions, Graph Cut naturally contains both components within a single objective.

In the next subsection, we establish a direct connection between the diversity component of Graph Cut Total Information and classical within-class variance under an appropriate similarity model.

\subsubsection{Graph Cut TI and Within-Class Variance}

We now establish a connection between the diversity component of Graph Cut Total Information and classical within-class variance.

Consider the shifted Euclidean similarity

\begin{equation}
s_{ij} = M - |z_i-z_j|^2,
\label{eq:shifted_kernel}
\end{equation}

where $M$ is a sufficiently large constant ensuring non-negativity of the similarity matrix.

Substituting Equation~(\ref{eq:shifted_kernel}) into the diversity component of Equation~(\ref{eq:gcti_decomposition}) yields

\begin{align}
\sum_{k,l\in C_c}
s_{kl}
=
\sum_{k,l\in C_c}
\left(
M-|z_k-z_l|^2
\right) \
=
|C_c|^2 M - 
\sum_{k,l\in C_c}
|z_k-z_l|^2.
\end{align}

Thus, up to an additive constant, maximizing the diversity component of Graph Cut TI is equivalent to maximizing the total pairwise distance within each class.

The following theorem establishes the precise connection.

\begin{theorem}
\label{thm:gcti_variance}
Define the similarity kernel as follows: 
\begin{equation}
s_{ij} = M-|z_i-z_j|^2.
\end{equation}
The Graph-Cut Total Information with $\lambda = 0$ satisfies:

\begin{equation}
\text{TI}_{GC}(C_1, \cdots, C_k) = -
\sum_{c=1}^{K}
\sum_{k,l\in C_c}
s_{kl}
= 
-2
\sum_{c=1}^{K}
|C_c|
\sum_{k\in C_c}
|z_k-\mu_c|^2
+
\mathrm{const},
\end{equation}

where $\mu_c$ denotes the centroid of class $C_c$.
Consequently, Graph Cut Total Information recovers classical within-class variance up to a class-dependent scaling factor and an additive constant.
\end{theorem}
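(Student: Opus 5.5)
The plan is to verify the displayed identity class by class, using only the definition of $TI_{GC}$ and the pairwise-variance identity already recorded in the definition of $\text{Intra-Class-Var}$. First, set $\lambda=0$ in Equation~(\ref{eq:gcti_decomposition}). The cross-class term then vanishes, and the coefficient $-(1-\lambda)$ becomes $-1$. This gives the first equality of the theorem, $TI_{GC}(C_1,\ldots,C_K)=-\sum_{c}\sum_{k,l\in C_c}s_{kl}$, directly. The remaining work is to express the within-class similarity mass $\sum_{k,l\in C_c}s_{kl}$ in terms of centroid distances.

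Second, for a fixed class $C_c$, substitute $s_{kl}=M-\|z_k-z_l\|^2$ as in the computation preceding the theorem. This gives $\sum_{k,l\in C_c}s_{kl}=|C_c|^2M-\sum_{k,l\in C_c}\|z_k-z_l\|^2$. Next, apply the standard identity $\sum_{k,l\in C_c}\|z_k-z_l\|^2=2|C_c|\sum_{k\in C_c}\|z_k-\mu_c\|^2$. To prove it, write $z_k-z_l=(z_k-\mu_c)-(z_l-\mu_c)$, expand the square, and use $\sum_{k\in C_c}(z_k-\mu_c)=0$ to kill the cross term. This is the same fact that underlies the second equality in the definition of $\text{Intra-Class-Var}$. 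Summing over $c$ then yields
\begin{equation*}
\sum_{c=1}^K\sum_{k,l\in C_c}s_{kl}=-2\sum_{c=1}^K|C_c|\sum_{k\in C_c}\|z_k-\mu_c\|^2+M\sum_{c=1}^K|C_c|^2 .
\end{equation*}
The last term depends only on the fixed class sizes, so it is the constant in the statement. This is exactly the right-hand side displayed in the theorem, with the class-dependent weights $|C_c|$ appearing as the ``class-dependent scaling factor.''

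The main obstacle, and the step I would check most carefully, is the sign bookkeeping in chaining the two equalities. The expansion above shows that the right-hand side $-2\sum_c|C_c|\sum_k\|z_k-\mu_c\|^2+\mathrm{const}$ equals the within-class similarity sum $\sum_c\sum_{k,l}s_{kl}$, which is the diversity component of Equation~(\ref{eq:gcti_decomposition}). When the outer minus sign from $\lambda=0$ is carried through, the variance term enters $TI_{GC}$ with coefficient $+2$, and the constant becomes $-M\sum_c|C_c|^2$. My plan is therefore to present the identity in the form displayed in the theorem for the similarity sum, and to track the outer minus explicitly rather than absorbing it into the constant. Whichever sign convention one adopts, $TI_{GC}$ with $\lambda=0$ is an affine function of the $|C_c|$-weighted within-class variance $\sum_c|C_c|\sum_{k\in C_c}\|z_k-\mu_c\|^2$. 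This is the ``consequently'' clause of the theorem: Graph Cut TI recovers classical within-class variance up to the class-dependent scaling $|C_c|$ and an additive constant.
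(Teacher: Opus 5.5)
Your proposal is correct and follows the paper's proof essentially line for line: set $\lambda=0$, substitute the shifted kernel, treat $M\sum_c|C_c|^2$ as a constant, and apply the pairwise-to-centroid identity $\sum_{k,l\in C_c}\|z_k-z_l\|^2=2|C_c|\sum_{k\in C_c}\|z_k-\mu_c\|^2$. Your sign check is right and worth keeping. The displayed $-2$ in the statement is an error, not a matter of convention, and the paper's own proof in fact concludes $TI_{\mathrm{GC}}=2\sum_c|C_c|\sum_{k\in C_c}\|z_k-\mu_c\|^2+\mathrm{const}$, which is exactly the $+2$ you obtain, so simply state that corrected version.
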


Theorem~\ref{thm:gcti_variance} provides a direct connection between Graph Cut based Total Information and one of the most fundamental quantities in statistical pattern recognition. In particular, the diversity component of Graph Cut TI recovers the classical within-class scatter objective used in Fisher's Linear Discriminant Analysis.

An important implication is that minimizing Graph Cut TI encourages compact class representations by reducing within-class variability. Furthermore, because the derivation is based on pairwise distances, the resulting objective remains valid for multimodal classes and does not rely on any explicit centroid computation.

In Section~\ref{sec:experiments}, we empirically validate this characterization by constructing synthetic datasets with controlled within-class variance and demonstrating that Graph Cut Total Information tracks the resulting variability almost perfectly.

\subsection{LogDet Total Information and Generalized Variance}
\label{sec:logdet_ti}

We next consider LogDet, a diversity-oriented submodular function that has been widely used in subset selection, determinantal point processes, and diversity maximization.

Recall that the LogDet function is defined as

\begin{equation}
f_{\mathrm{LD}}(A)
=
\log \det(K_A),
\end{equation}

where $K_A$ denotes the principal kernel matrix corresponding to the subset $A$. The corresponding Total Information objective is

\begin{equation}
TI_{\mathrm{LD}}(C_1, \cdots, C_k)
=
\sum_{c=1}^{K}
\log \det(K_{C_c}).
\label{eq:logdet_ti}
\end{equation}

Unlike Graph Cut, which decomposes into representation and diversity terms, LogDet is purely diversity-oriented. Consequently, we expect LogDet based Total Information to capture a notion of intra-class variability. To formalize this connection, let

\begin{equation}
\Sigma_c
=
\frac{1}{|C_c|}
\sum_{i\in C_c}
(z_i-\mu_c)(z_i-\mu_c)^T
\end{equation}
denote the covariance matrix of class $C_c$.
A classical measure of variability is the generalized variance

\begin{equation}
GV(C_c)
=
\log \det(\Sigma_c),
\end{equation}

which measures the covariance volume occupied by the class.

The following theorem establishes a connection between LogDet Total Information and generalized variance.

\begin{theorem}
\label{thm:logdet_variance}

Assume that the samples within each class are drawn from a distribution with covariance matrix $\Sigma_c$, and let $K_{C_c}$ denote the corresponding kernel matrix constructed from these samples. Under standard regularity conditions and in the large-sample regime,

\begin{equation}
\log \det(K_{C_c})
= a_c + b_c \log \det(\Sigma_c)
+ o(1),
\end{equation}
where $a_c$ and $b_c>0$ are constants depending on the kernel and embedding dimension.

Consequently,

\begin{equation}
TI_{\mathrm{LD}} = \sum_{c=1}^{K}
\log \det(K_{C_c})
\end{equation}
is a monotone transformation of
\begin{equation}
\sum_{c=1}^{K}
\log \det(\Sigma_c),
\end{equation}

and therefore recovers the generalized variance of the class distributions.
\end{theorem}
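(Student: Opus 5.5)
The plan is to make the ``standard regularity conditions'' concrete by fixing a kernel model under which $K_{C_c}$ can be related exactly to the empirical covariance. Once that identity is in place, the population statement follows by a law-of-large-numbers argument. The model I would adopt first is the regularized, centered linear kernel
\begin{equation}
K_{C_c} = \varepsilon I_{n_c} + \tilde Z_c \tilde Z_c^{T},
\end{equation}
where $n_c=|C_c|$, $\varepsilon>0$, and $\tilde Z_c\in\mathbb{R}^{n_c\times d}$ stacks the centered embeddings $z_i-\mu_c$. The regularizer $\varepsilon$ is needed because $\tilde Z_c\tilde Z_c^T$ has rank at most $d<n_c$, so without it $\log\det(K_{C_c})=-\infty$. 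I would also assume $n_c\to\infty$ with $d$ fixed, i.i.d.\ samples with finite second moments, and $\Sigma_c\succ 0$.

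The key algebraic step is Sylvester's determinant identity $\det(I_n+AB)=\det(I_d+BA)$. Applied to $K_{C_c}$, and using $\tilde Z_c^T\tilde Z_c = n_c\hat\Sigma_c$ where $\hat\Sigma_c$ is the empirical covariance defined above, it gives
\begin{equation}
\log\det(K_{C_c}) = (n_c-d)\log\varepsilon + d\log n_c + \log\det\!\big(\hat\Sigma_c + \tfrac{\varepsilon}{n_c} I_d\big).
\end{equation}
This is exact for every finite sample. It already displays the claimed affine structure: the intercept $a_c=(n_c-d)\log\varepsilon+d\log n_c$ depends only on the kernel parameter, the sample size and the dimension, and the slope is $b_c=1>0$.

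Next I would pass from $\hat\Sigma_c$ to the population quantity. By the strong law of large numbers, $\hat\Sigma_c\to\Sigma_c$ almost surely, and $\varepsilon/n_c\to 0$. Because $\log\det$ is continuous on the open cone of positive definite matrices and $\Sigma_c\succ0$, the continuous mapping theorem gives
\begin{equation}
\log\det\big(\hat\Sigma_c+\tfrac{\varepsilon}{n_c}I_d\big)=\log\det\Sigma_c+o(1).
\end{equation}
Summing over $c$ gives $TI_{\mathrm{LD}}=\sum_c a_c+\sum_c\log\det\Sigma_c+o(1)$. Since the $a_c$ do not depend on the embeddings, $TI_{\mathrm{LD}}$ is an increasing affine, hence monotone, function of $\sum_c\log\det\Sigma_c$ up to vanishing error. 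If the class sizes are fixed, this monotonicity holds as a function of the embeddings; the only caveat is that the $a_c$ grow with $n_c$, so the constants must be understood as depending on the class sizes.

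The main obstacle is extending this beyond the linear kernel, for instance to a Gaussian/RBF kernel $k(z,z')=\exp(-|z-z'|^2/2\sigma^2)$, which is presumably what the statement's ``$b_c$ depending on the kernel'' alludes to. For this case I would try a small-scale expansion: for $\sigma$ large relative to the class spread, write $K_{C_c}\approx \mathbf{1}\mathbf{1}^T\cdot(\text{const}) + \sigma^{-2}\tilde Z_c\tilde Z_c^T + O(\sigma^{-4})$. I would then project out the constant direction with the matrix determinant lemma, and apply the Sylvester step to the remaining low-rank-plus-ridge part. This should reproduce the same structure with $b_c=1$ and an intercept shifted by $d\log\sigma^{-2}$. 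Controlling the higher-order remainder in $\log\det$ requires a perturbation bound of the form $|\log\det(A+E)-\log\det A|\le \|A^{-1}E\|_*$-type estimates, and the smallest eigenvalue of $A$ is only $\varepsilon$. This is where the argument is delicate, and I would state that case under an explicit scaling assumption relating $\sigma$, $\varepsilon$ and $n_c$ rather than in full generality.
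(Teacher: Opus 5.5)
Your linear-kernel argument is essentially the paper's. The paper also takes $K_{C_c}=Z_cZ_c^T$ with centered rows and uses the fact that $Z_cZ_c^T$ and $Z_c^TZ_c$ share their nonzero eigenvalues. This gives $\log\operatorname{pdet}(K_{C_c})=d\log n_c+\log\det\widehat{\Sigma}_c$, and letting $\widehat{\Sigma}_c\to\Sigma_c$ yields $a_c=d\log n_c$ and $b_c=1$. The only difference is that you add a ridge and use Sylvester's identity, where the paper passes to the pseudo-determinant. Your version reads the statement more faithfully. The centered linear Gram matrix has rank at most $d<n_c$, so $\log\det(K_{C_c})=-\infty$ literally. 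Your identity is also exact for finite samples, and the paper's formula is its $\varepsilon\to0$ limit after subtracting $(n_c-d)\log\varepsilon$. Both routes give the same slope $b_c=1$ for every class, and this uniformity is what actually justifies the ``monotone transformation of $\sum_c\log\det\Sigma_c$'' conclusion. With class-dependent $b_c$, which the statement formally allows, $\sum_c b_c\log\det\Sigma_c$ is not a function of the unweighted sum.

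For nonlinear kernels the paper does no more than you do. It cites Mercer/kernel-PCA spectral convergence and asserts the affine form. Nothing there ties the limiting spectrum to $\det\Sigma_c$ alone, and for a fixed-bandwidth RBF kernel it is not so tied: it depends on how each eigenvalue of $\Sigma_c$ compares with the bandwidth. So restricting the RBF case to an explicit scaling regime, as you propose, is the right move rather than a shortcoming.

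One correction to your sketch: the order-$\sigma^{-2}$ part of the Gaussian kernel also contains $-\frac{1}{2\sigma^2}(u\mathbf{1}^T+\mathbf{1}u^T)$, with $u_i=\|z_i-\mu_c\|^2$. This term vanishes on $\mathbf{1}^\perp\times\mathbf{1}^\perp$, so it is harmless at leading order. However, with it included, the matrix left after splitting off $\mathbf{1}\mathbf{1}^T$ is indefinite once $n_c\sigma^{-2}\operatorname{tr}\widehat{\Sigma}_c>\varepsilon$. It is therefore cleaner to eliminate the $\mathbf{1}$ direction by a Schur complement (pivot $\approx n_c$) than by the rank-one lemma on $\mathbf{1}\mathbf{1}^T$ alone. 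Carried through, this gives $b_c=1$ with the intercept shifted by $d\log\sigma^{-2}+\log(n_c/\varepsilon)$, where the last term comes from the $\mathbf{1}$ direction. The expansion holds in a window of roughly $n_c\sigma^{-4}\ll\varepsilon\ll n_c\sigma^{-2}$, with lengths measured in units of the class spread.
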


Theorem~\ref{thm:logdet_variance} reveals that LogDet Total Information captures a richer notion of variability than Graph Cut. While Graph Cut is related to within-class variance through pairwise distances, LogDet additionally incorporates covariance structure and effective dimensionality.
In particular, two classes may have identical trace variance but very different covariance volumes. Graph Cut treats such classes similarly, whereas LogDet distinguishes them through the determinant of their covariance matrices.
Consequently, LogDet TI may be viewed as a covariance-aware generalization of classical within-class variance.

In Section~\ref{sec:experiments}, we validate this interpretation through synthetic experiments that independently vary covariance volume while controlling other factors. We observe an almost perfect monotonic relationship between LogDet Total Information and generalized variance, supporting the theoretical characterization above.

\subsection{Facility Location Total Information and Rare Class Margins}
\label{sec:fl_ti}

We next consider Facility Location, one of the most widely used representation-oriented submodular functions.

Recall that the Facility Location function is defined as

\begin{equation}
f_{\mathrm{FL}}(A)
=
\sum_{i\in V}
\max_{j\in A}
s_{ij},
\end{equation}

where $V$ denotes the ground set and $S=[s_{ij}]$ is a similarity matrix.

The corresponding Total Information objective is

\begin{equation}
TI_{\mathrm{FL}}
=
\sum_{c=1}^{K}
f_{\mathrm{FL}}(C_c)
=
\sum_{c=1}^{K}
\sum_{i\in V}
\max_{j\in C_c}
s_{ij}.
\label{eq:fl_ti}
\end{equation}

Unlike Graph Cut and LogDet, Facility Location contains no explicit diversity component. Instead, it measures how well each class represents the points in the ground set through nearest-neighbor similarity.

To understand the structure of Equation~(\ref{eq:fl_ti}), we separate the contribution of points belonging to the class from those lying outside the class:

\begin{align}
TI_{\mathrm{FL}}
=
\sum_{c=1}^{K}
\Bigg[
\sum_{i\in C_c}
\max_{j\in C_c}
s_{ij}
+
\sum_{i\in V\setminus C_c}
\max_{j\in C_c}
s_{ij}
\Bigg].
\label{eq:fl_ti_split}
\end{align}

Assuming normalized similarities satisfying $s_{ii}=1$, the first term is constant since each point is maximally represented by itself. Consequently,

\begin{equation}
TI_{\mathrm{FL}}
=
N
+
\sum_{c=1}^{K}
\sum_{i\in V\setminus C_c}
\max_{j\in C_c}
s_{ij},
\label{eq:fl_ti_cross}
\end{equation}

where $N$ denotes the total number of samples.

Equation~(\ref{eq:fl_ti_cross}) reveals that Facility Location Total Information is entirely governed by cross-class interactions. Unlike Graph Cut and LogDet, which recover classical notions of intra-class variability, Facility Location TI measures how well each class represents the remainder of the dataset.

To make this dependence explicit, define

\begin{equation}
R_c
=
\frac{1}{N-n_c}
\sum_{i\in V\setminus C_c}
\max_{j\in C_c}
s_{ij},
\label{eq:fl_representation}
\end{equation}

where $n_c=|C_c|$ denotes the size of class $C_c$. The quantity $R_c$ measures the average representational affinity between class $C_c$ and all samples outside the class. Substituting Equation~(\ref{eq:fl_representation}) into Equation~(\ref{eq:fl_ti_cross}) gives

\begin{equation}
TI_{\mathrm{FL}}
=
N
+
\sum_{c=1}^{K}
(N-n_c)R_c.
\label{eq:fl_weighted}
\end{equation}
Consequently, the contribution of class $C_c$ is weighted by $N-n_c$. Classes with fewer samples therefore receive larger relative weight than classes with many samples. As a result, Facility Location Total Information naturally emphasizes rare and underrepresented classes.

\begin{theorem}[Facility Location TI Induces Larger Margins for Rare Classes]
\label{thm:flti_rare_confusable}

Assume normalized similarities satisfying $s_{ii}=1$ and suppose each class $C_c$ forms a compact cluster and let
$\delta_c$ denote the distance between the centroid of class $C_c$
and its nearest competing class. Further assume an RBF similarity

\begin{equation}
s_{ij}
=
\exp
\left(
-\frac{\|z_i-z_j\|^2}{2\tau^2}
\right).
\end{equation}

Then the cross-class contribution of class $C_c$ satisfies

\begin{equation}
\sum_{i\in V\setminus C_c}
\max_{j\in C_c}
s_{ij}
=
\Theta
\!\left(
(N-n_c)
\exp
\left(
-\frac{\delta_c^2}{2\tau^2}
\right)
\right).
\end{equation}

Consequently, the influence of class $C_c$ on Facility Location
Total Information scales as

\begin{equation}
\mathrm{Influence}(C_c)
=
\Theta
\!\left(
(N-n_c)
\exp
\left(
-\frac{\delta_c^2}{2\tau^2}
\right)
\right).
\end{equation}

Furthermore, if two classes $C_a$ and $C_b$ contribute equally to the objective, then

\begin{equation}
\delta_a^2-\delta_b^2
=
2\tau^2
\log
\left(
\frac{N-n_a}{N-n_b}
\right).
\end{equation}

In particular, if $n_a<n_b$, then $\delta_a>\delta_b$. Thus, for the same effective contribution to the FL-TI objective, smaller classes require larger separation margins.
\end{theorem}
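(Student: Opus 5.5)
The argument makes the compact-cluster hypothesis quantitative, bounds each cross-class term above and below by the same exponential up to constants, sums, and then solves the equal-contribution condition for the margins. Concretely, suppose every class lies in a ball of radius $r_c$ around its centroid $\mu_c$, with $r_c$ small relative to $\delta_c$ and to $\tau$. Suppose also that the classes nearest to $C_c$ sit at centroid distance $\delta_c + O(r)$, so that "nearest competing class" is the binding geometry. This is the only reading of "compact cluster" under which a $\Theta$ statement can hold uniformly over all $i \in V\setminus C_c$. I would state this assumption explicitly at the start of the proof.

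\emph{Step 1: pointwise bounds.} Fix $i \notin C_c$ and let $j^\star \in C_c$ attain $\max_{j\in C_c} s_{ij}$. By the triangle inequality, $\|z_i - z_{j}\| \ge \|\mu_{c'} - \mu_c\| - r_{c'} - r_c \ge \delta_c - 2r$ for every $j\in C_c$, where $C_{c'}$ is the class of $i$. Since the RBF kernel is decreasing in distance, this gives the upper bound $\max_{j} s_{ij} \le \exp(-(\delta_c-2r)^2/2\tau^2)$. For the lower bound, any $j\in C_c$ satisfies $\|z_i-z_j\|\le \|\mu_{c'}-\mu_c\| + 2r$. With the interpretation above, $\|\mu_{c'}-\mu_c\| \le \delta_c + O(r)$. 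Expanding $(\delta_c\pm O(r))^2 = \delta_c^2 \pm O(\delta_c r)$ shows that both bounds equal $\exp(-\delta_c^2/2\tau^2)$ times a factor $\exp(\pm O(\delta_c r/\tau^2))$, which is bounded above and below by constants under compactness.

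\emph{Step 2: summation and influence.} Summing the pointwise bounds over the $N-n_c$ points outside $C_c$ gives the $\Theta\bigl((N-n_c)\exp(-\delta_c^2/2\tau^2)\bigr)$ estimate. By Equation~(\ref{eq:fl_ti_cross}), the only $C_c$-dependent part of $TI_{\mathrm{FL}}$ is exactly this sum, since the term $N$ is constant. Hence the stated order of $\mathrm{Influence}(C_c)$ follows.

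\emph{Step 3: margin identity.} The exact identity $\delta_a^2-\delta_b^2 = 2\tau^2\log\frac{N-n_a}{N-n_b}$ cannot follow from $\Theta$ bounds alone, because hidden constants would add an $O(\tau^2)$ slack. I would therefore interpret "contribute equally" at the level of the leading-order model $(N-n_c)\exp(-\delta_c^2/2\tau^2)$, i.e., the compact limit $r\to 0$ in which the constants from Step~1 tend to $1$. Setting the two model contributions equal and taking logarithms gives $\log(N-n_a) - \delta_a^2/2\tau^2 = \log(N-n_b) - \delta_b^2/2\tau^2$, which rearranges to the identity. Since $n_a<n_b$ implies $N-n_a>N-n_b$, the right-hand side is positive, so $\delta_a>\delta_b$.

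The main obstacle is the lower bound in Step~1. Points of far-away classes have $\max_j s_{ij}$ much smaller than $\exp(-\delta_c^2/2\tau^2)$, so a literal $\Theta$ over all of $V\setminus C_c$ requires either the symmetric nearest-neighbor geometry assumed above or a redefinition of $\delta_c$ as an effective (log-average) margin: $\exp(-\delta_c^2/2\tau^2) := R_c$, using Equation~(\ref{eq:fl_representation}). I would first try the effective-margin definition. It makes Step~2 an equality and Step~3 exact, and the geometric nearest-class picture then appears as the motivating special case.
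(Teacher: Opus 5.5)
Your proposal is correct and follows the paper's own proof. Both arguments peel off the $N$ self-terms via $s_{ii}=1$, write $\max_{j\in C_c}s_{ij}$ as the RBF of the minimum distance, use compactness to pin that distance at $\delta_c$ for the $N-n_c$ outside points, and then equate the leading-order terms $(N-n_c)\exp(-\delta_c^2/2\tau^2)$ and take logarithms. Your near-equidistant-neighbor assumption and your leading-order reading of ``contribute equally'' only make explicit what the paper uses silently; its remark that farther classes ``contribute exponentially less'' is exactly why the lower half of the $\Theta$ bound needs such an assumption.

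Two small fixes. First, the factor $\exp(\pm O(\delta_c r/\tau^2))$ is a constant only if compactness means $r\delta_c=O(\tau^2)$, not merely $r\ll\delta_c,\tau$. Second, lead with the geometric version rather than the effective-margin redefinition, which makes Steps 2--3 true by definition and no longer concerns the nearest-class distance $\delta_c$ that the statement defines.
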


\paragraph{Interpretation.}

Theorem~\ref{thm:flti_rare_confusable} provides a theoretical explanation for the strong empirical performance of Facility Location based SCORE objectives in long-tailed settings \citep{majee2024score}. Unlike variance-based objectives, whose contribution typically scales with the number of samples within a class, Facility Location Total Information penalizes class overlap with a weight proportional to the number of samples outside the class.

As a result, tail classes incur a larger penalty when they overlap with neighboring classes. To reduce the objective, the representation must therefore increase the separation of rare classes more aggressively than that of head classes. The induced margin grows approximately as

\begin{equation}
\delta_c^2
=
2\tau^2
\log(N-n_c)
+
\mathrm{const}.
\end{equation}

showing that Facility Location naturally allocates larger margins to underrepresented classes. This result provides a principled explanation for the long-tail robustness observed in SCORE-FL~\cite{majee2024score} and suggests that representation-oriented submodular objectives implicitly perform class-dependent margin adaptation without requiring explicit reweighting or resampling.

\section{Submodular Mutual Information Objectives}
\label{sec:smi}

In the previous section, we studied Total Information objectives and showed that different submodular functions induce different notions of intra-class variability and inter-class bias. We now turn our attention to Submodular Mutual Information (SMI) objectives.

Recall that for two sets $A$ and $B$, the submodular mutual information induced by a submodular function $f$ is

\begin{equation}
I_f(A;B)
=
f(A)
+
f(B)
-
f(A \cup B).
\end{equation}

Unlike Total Information, which aggregates information within classes, SMI explicitly measures interactions between sets. Consequently, SMI objectives are naturally suited for modeling inter-class structure. A generic SCORE objective based on pairwise mutual information takes the form

\begin{equation}
\sum_{i<j}
I_f(C_i;C_j),
\end{equation}

where $C_i$ and $C_j$ denote different classes. The central question studied in this section is:

\begin{quote}
What notions of inter-class bias and separation are induced by different submodular mutual information measures?
\end{quote}

We answer this question for Graph Cut, LogDet, and Facility Location based mutual information and establish connections to classical notions of mean separation, covariance-aware discrimination, and representational overlap.

\subsection{Graph Cut Mutual Information and Mean Separation}
\label{sec:gcmi}

We now turn to Graph Cut Mutual Information (GCMI), which provides a complementary perspective to the Graph Cut Total Information objective studied in Section~\ref{sec:gcti}.

While Graph Cut Total Information was shown to recover within-class variance, Graph Cut Mutual Information captures interactions between classes and therefore provides a natural measure of inter-class structure. For two classes $C_a$ and $C_b$, Graph Cut Mutual Information takes the form

\begin{equation}
I_{\mathrm{GC}}(C_a;C_b)
=
\sum_{i\in C_a}
\sum_{j\in C_b}
s_{ij},
\label{eq:gcmi}
\end{equation}

which measures the aggregate similarity between samples belonging to different classes.
Intuitively, if two classes are well separated, cross-class similarities are small and the corresponding Graph Cut Mutual Information is low. Conversely, if two classes overlap substantially, many cross-class similarities become large, leading to a larger GCMI value.

We now establish a precise connection between GCMI and classical notions of class separation.
Consider two classes $C_a$ and $C_b$ with centroids $\mu_a$ and $\mu_b$. Define the mean separation

\begin{equation}
D(C_a,C_b)
=
\|\mu_a-\mu_b\|^2.
\end{equation}

The following theorem shows that mean separation admits a decomposition into within-class variance terms and a Graph Cut Mutual Information term.

\begin{theorem}[GC Decomposition of Aggregate Mean Separation]
\label{thm:aggregate_gcmi_separation}

Let
\begin{equation}
s_{ij}
=
M-\|z_i-z_j\|^2,
\end{equation}
where $M$ is a sufficiently large constant, and let $n_c=|C_c|$.
Define the aggregate centroid-separation objective
\begin{equation}
\mathcal{D}_{\mathrm{mean}}
=
\sum_{a<b}
n_a n_b
\|\mu_a-\mu_b\|^2.
\end{equation}

Then minimizing the negative mean-separation objective is equivalent,
up to positive scaling and additive constants, to minimizing
\begin{equation}
-\mathcal{D}_{\mathrm{mean}}
\equiv
\alpha_{\mathrm{TI}}
\widetilde{TI}_{\mathrm{GC}}^{\lambda=0}
+
\alpha_{\mathrm{MI}}
\sum_{a<b}
I_{\mathrm{GC}}(C_a;C_b),
\end{equation}
where
\begin{equation}
\widetilde{TI}_{\mathrm{GC}}^{\lambda=0}
=
\sum_{c=1}^{K}
\frac{N-n_c}{2n_c}
\left(
-\sum_{i,j\in C_c}
s_{ij}
\right)
\end{equation}
is a class-size weighted Graph Cut Total Information term, and
$\alpha_{\mathrm{TI}},\alpha_{\mathrm{MI}}>0$ are constants depending
only on normalization.

In the balanced-class setting, 
$\widetilde{TI}_{\mathrm{GC}}^{\lambda=0}$ reduces to a positive scalar
multiple of the standard $TI_{\mathrm{GC}}^{\lambda=0}$. Thus, in the
balanced case, the GC-based SCORE objective consisting of standard
Graph Cut Total Information with $\lambda=0$ and pairwise GCMI recovers
the classical objective of maximizing aggregate inter-class centroid
separation.
\end{theorem}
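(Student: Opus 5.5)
The plan is to verify the equivalence by a direct algebraic expansion. Everything is expressed through squared pairwise distances $d_{ij}=\|z_i-z_j\|^2$, using the same variance identity that underlies Theorem~\ref{thm:gcti_variance}. Throughout, let $W_c=\sum_{i\in C_c}\|z_i-\mu_c\|^2$ denote the within-class scatter of class $C_c$. The two ingredients are:
\begin{itemize}
\item the within-class identity $\sum_{i,j\in C_c} d_{ij}=2n_cW_c$, which was already used in Theorem~\ref{thm:gcti_variance};
\item its cross-class analogue, obtained by writing $z_i-z_j=(z_i-\mu_a)+(\mu_a-\mu_b)+(\mu_b-z_j)$ and noting that all cross terms vanish after summation because $\sum_{i\in C_a}(z_i-\mu_a)=0$ and likewise for $C_b$:
\begin{equation*}
\sum_{i\in C_a}\sum_{j\in C_b} d_{ij}= n_bW_a+n_aW_b+n_an_b\|\mu_a-\mu_b\|^2 .
\end{equation*}
\end{itemize}

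The first step solves the cross-class identity for $n_an_b\|\mu_a-\mu_b\|^2$ and sums over $a<b$. Each class $c$ appears in pairs with every other class $c'$, contributing $n_{c'}W_c$, and $\sum_{c'\neq c}n_{c'}=N-n_c$. This gives
\begin{equation*}
\mathcal{D}_{\mathrm{mean}}=\sum_{a<b}\sum_{i\in C_a,j\in C_b}d_{ij}-\sum_{c=1}^K (N-n_c)W_c .
\end{equation*}

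The second step substitutes the kernel into each piece.
\begin{itemize}
\item By \eqref{eq:gcmi}, $I_{\mathrm{GC}}(C_a;C_b)=n_an_bM-\sum_{i\in C_a,j\in C_b}d_{ij}$, so the cross-distance sum equals $\sum_{a<b}n_an_bM-\sum_{a<b}I_{\mathrm{GC}}(C_a;C_b)$.
\item By the within-class identity, $W_c=\frac{1}{2n_c}\sum_{i,j\in C_c}d_{ij}=\frac{1}{2n_c}\bigl(n_c^2M-\sum_{i,j\in C_c}s_{ij}\bigr)$.
\end{itemize}
Inserting both expressions and negating yields
\begin{equation*}
-\mathcal{D}_{\mathrm{mean}}=\sum_{a<b}I_{\mathrm{GC}}(C_a;C_b)+\sum_{c=1}^K\frac{N-n_c}{2n_c}\Bigl(-\sum_{i,j\in C_c}s_{ij}\Bigr)+\mathrm{const}.
\end{equation*}
The constant is $\frac{M}{2}\sum_c n_c(N-n_c)-M\sum_{a<b}n_an_b=0$ (or at least independent of the embeddings). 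This is exactly the claim with $\alpha_{\mathrm{TI}}=\alpha_{\mathrm{MI}}=1$; other normalisations of $s$ only rescale these constants.

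The final step handles the balanced case. If $n_c=N/K$ for every class, the weight $\frac{N-n_c}{2n_c}$ equals $\frac{K-1}{2}$ for all $c$. Hence $\widetilde{TI}_{\mathrm{GC}}^{\lambda=0}=\frac{K-1}{2}\,TI_{\mathrm{GC}}^{\lambda=0}$, using the $\lambda=0$ form of Theorem~\ref{thm:gcti_variance}. The SCORE objective \eqref{eq:score_generic}, with $\lambda_{\mathrm{TI}}/\lambda_{\mathrm{SMI}}=(K-1)/2$, therefore equals $-\mathcal{D}_{\mathrm{mean}}$ up to an additive constant.

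The main obstacle is bookkeeping rather than any conceptual difficulty. Two points need care: the convention for ordered versus unordered pairs in $\sum_{i,j\in C_c}$, which includes the diagonal with $d_{ii}=0$; and the collection of the $N-n_c$ weights, which must come out exactly so that the $M$-terms cancel or are constant. I will check the cross-class identity and this weight collection carefully, since any factor-of-two slip there would change $\alpha_{\mathrm{TI}}$.
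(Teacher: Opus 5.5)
Your proposal is correct and is essentially the paper's proof. It uses the same within-class identity, the same cross-class decomposition of $\sum_{i\in C_a}\sum_{j\in C_b}\|z_i-z_j\|^2$, the same collection of the weights $N-n_c$ over pairs $a<b$, the same substitution of the shifted kernel, and the same balanced-case observation $\frac{N-n_c}{2n_c}=\frac{K-1}{2}$. The only difference is that you carry the constants through explicitly, obtaining $\alpha_{\mathrm{TI}}=\alpha_{\mathrm{MI}}=1$ and a vanishing additive constant via $\sum_c n_c(N-n_c)=2\sum_{a<b}n_an_b$, where the paper leaves these as unspecified normalization constants (with $I_{\mathrm{GC}}(C_a;C_b)=\sum_{i\in C_a}\sum_{j\in C_b}s_{ij}$ as in the paper; deriving $I_{\mathrm{GC}}$ from $f(A)+f(B)-f(A\cup B)$ with ordered pairs would instead give $\alpha_{\mathrm{MI}}=\tfrac12$).
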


\paragraph{Interpretation.}
Theorem~\ref{thm:aggregate_gcmi_separation} shows that the natural
multi-class separation objective
\(\sum_{a<b} n_a n_b \|\mu_a-\mu_b\|^2\)
is exactly decomposed into two SCORE components: a TI term over classes
and a pairwise SMI term over class pairs. Thus, the combined GC-based
SCORE objective recovers the classical discriminative principle of
minimizing within-class scatter while maximizing between-class
separation.

\subsection{LogDet Mutual Information and Mahalanobis Separation}
\label{sec:logdetmi}

While Graph Cut Mutual Information captures centroid separation, it is
insensitive to the covariance structure of the underlying classes. In
many representation learning settings, classes may exhibit highly
anisotropic or correlated distributions, making covariance-aware notions
of separation more appropriate.

To address this limitation, we consider LogDet Mutual Information
(LogDetMI). Recall that

\begin{equation}
I_{\mathrm{LD}}(C_a;C_b)
=
f_{\mathrm{LD}}(C_a)
+
f_{\mathrm{LD}}(C_b)
-
f_{\mathrm{LD}}(C_a \cup C_b),
\end{equation}

where

\begin{equation}
f_{\mathrm{LD}}(A)
=
\log \det(K_A),
\end{equation}

and $K_A$ denotes the kernel matrix associated with subset $A$.

Unlike Graph Cut Mutual Information, which depends primarily on
cross-class similarities, LogDetMI incorporates the covariance
structure of the participating classes through the determinants of the
corresponding kernel matrices.

The following theorem establishes a connection between LogDetMI and
covariance-aware notions of class separation.

\begin{theorem}[LogDetMI and Mahalanobis Separation]
\label{thm:logdetmi_mahalanobis}

Let classes $C_a$ and $C_b$ have means $\mu_a,\mu_b$, covariance
matrices $\Sigma_a,\Sigma_b$, and class proportions

\begin{equation}
p=\frac{n_a}{n_a+n_b},
\qquad
q=\frac{n_b}{n_a+n_b}.
\end{equation}

Define the pooled covariance

\begin{equation}
\Sigma_w
=
p\Sigma_a+q\Sigma_b
\end{equation}

and the Mahalanobis separation

\begin{equation}
\mathcal{M}_{ab}
=
(\mu_a-\mu_b)^T
\Sigma_w^{-1}
(\mu_a-\mu_b).
\end{equation}

Under the covariance LogDet approximation, LogDet Mutual Information satisfies

\begin{equation}
I_{\mathrm{LD}}(C_a;C_b)
=
\mathrm{const}
-
\log
\left(
1+
pq\mathcal{M}_{ab}
\right).
\end{equation}

Consequently, LogDetMI is a monotone decreasing function of
Mahalanobis separation. Thus, minimizing LogDetMI is equivalent, up to
monotone transformations, to maximizing covariance-normalized class
separation.
\end{theorem}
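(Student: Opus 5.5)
The plan is to make the ``covariance LogDet approximation'' explicit and then reduce everything to a single determinant identity. In the spirit of Theorem~\ref{thm:logdet_variance}, I will take $f_{\mathrm{LD}}(A) \approx a_A + b\log\det(\Sigma_A)$, where $\Sigma_A$ is the empirical covariance of the embeddings in $A$. The additive constant $a_A$ may depend on $|A|$ and on the kernel, but not on the geometry of the points, and the slope $b>0$ is common to all sets. The theorem's displayed formula corresponds to normalizing $b=1$; for general $b$ the conclusion holds up to the positive factor $b$, which does not affect monotonicity. Substituting into the definition gives
\begin{equation}
I_{\mathrm{LD}}(C_a;C_b) \approx \mathrm{const} + \log\det\Sigma_a + \log\det\Sigma_b - \log\det\Sigma_{a\cup b}.
\end{equation}
The remaining task is to express $\Sigma_{a\cup b}$ in terms of the class statistics.

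The first step is the law of total covariance for the pooled sample. With weights $p,q$, the mixture mean is $p\mu_a+q\mu_b$, and a direct expansion gives
\begin{equation}
\Sigma_{a\cup b} = p\Sigma_a + q\Sigma_b + pq\,\Delta\Delta^T = \Sigma_w + pq\,\Delta\Delta^T, \qquad \Delta=\mu_a-\mu_b.
\end{equation}
The between-class term comes from $p(\mu_a-\bar\mu)(\mu_a-\bar\mu)^T+q(\mu_b-\bar\mu)(\mu_b-\bar\mu)^T$. Here $\mu_a-\bar\mu=q\Delta$ and $\mu_b-\bar\mu=-p\Delta$, so the term equals $(pq^2+qp^2)\Delta\Delta^T=pq\Delta\Delta^T$.

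The second step applies the matrix determinant lemma, assuming $\Sigma_w$ is invertible:
\begin{equation}
\log\det\Sigma_{a\cup b} = \log\det\Sigma_w + \log\bigl(1+pq\,\Delta^T\Sigma_w^{-1}\Delta\bigr) = \log\det\Sigma_w + \log(1+pq\mathcal{M}_{ab}).
\end{equation}
Substituting back yields
\begin{equation}
I_{\mathrm{LD}}(C_a;C_b) = \mathrm{const} + \log\det\Sigma_a+\log\det\Sigma_b-\log\det\Sigma_w - \log(1+pq\mathcal{M}_{ab}).
\end{equation}
The function $x\mapsto -\log(1+pq\,x)$ is strictly decreasing, which gives the monotonicity claim once the other terms are held fixed.

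The main obstacle is the meaning of ``$\mathrm{const}$''. The term $\log\det\Sigma_a+\log\det\Sigma_b-\log\det\Sigma_w$ does not vanish in general; by concavity of $\log\det$ it is only bounded above by $(1-p)\log\det\Sigma_a+(1-q)\log\det\Sigma_b$. I would therefore state explicitly that the constant absorbs all quantities independent of the centroid offset $\Delta$: the kernel and size constants, and this within-class covariance term. The theorem is then read as a statement about $\Delta$ at fixed within-class covariances and proportions.

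In the equal-covariance case $\Sigma_a=\Sigma_b=\Sigma_w$, the leftover term reduces exactly to $\log\det\Sigma_w$. Under that reading, minimizing LogDetMI over the class means is equivalent to maximizing $\mathcal{M}_{ab}$. I would close by noting that the same reasoning applies termwise to $\sum_{a<b}I_{\mathrm{LD}}(C_a;C_b)$.
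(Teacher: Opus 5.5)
Your proof is correct and follows the paper's argument essentially step for step: replace each LogDet score by the log-determinant of the corresponding covariance, write the union covariance as $\Sigma_w + pq\,\Delta\Delta^T$, apply the matrix determinant lemma, and absorb $\log\det\Sigma_a+\log\det\Sigma_b-\log\det\Sigma_w$ into the constant. Your caveat that this ``constant'' is fixed only when the within-class covariances and proportions are fixed, so that the monotonicity claim concerns $\Delta$, is the same reading the paper adopts when it calls these terms independent of the mean separation; you just state it more carefully.
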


\paragraph{Interpretation.}

Theorem~\ref{thm:logdetmi_mahalanobis} reveals a fundamental distinction
between Graph Cut and LogDet based mutual information. While Graph Cut
Mutual Information is primarily governed by centroid separation, LogDetMI
depends on the Mahalanobis separation between classes and therefore
explicitly accounts for covariance structure and anisotropy.

As a result, two pairs of classes with identical Euclidean centroid
separation may receive substantially different LogDetMI values if their
covariance structures differ. In particular, LogDetMI naturally favors
separation along directions of low within-class variance, closely
mirroring classical Fisher and Mahalanobis discriminative criteria. Consequently, LogDetMI may be viewed as a covariance-aware
generalization of centroid separation, providing a richer notion of
inter-class discrimination than Graph Cut Mutual Information.

In Section~\ref{sec:experiments}, we validate this interpretation by
independently varying centroid separation and covariance structure. The
resulting experiments demonstrate a strong monotonic relationship
between LogDetMI and Mahalanobis separation, supporting the theoretical
characterization above.

\subsection{Facility Location Mutual Information and Nearest Mode Overlap}
\label{sec:flqmi}

While Graph Cut and LogDet based mutual information capture different
notions of class separation, Facility Location Mutual Information
(FLQMI) captures a fundamentally different property: representational
overlap. Recall that the Facility Location function is defined as

\begin{equation}
f_{\mathrm{FL}}(A)
=
\sum_{i\in V}
\max_{j\in A}
s_{ij}.
\end{equation}
The corresponding Facility Location Mutual Information between two
classes $C_a$ and $C_b$ is:
\begin{equation}
I_{\mathrm{FL}}(C_a;C_b)
=
f_{\mathrm{FL}}(C_a)
+
f_{\mathrm{FL}}(C_b)
-
f_{\mathrm{FL}}(C_a \cup C_b).
\end{equation}
The following theorem shows the connection between FLQMI and nearest-mode overlap. 

\begin{theorem}[FLQMI and Nearest-Mode Overlap]
\label{thm:flqmi_modes}

Assume classes $C_a$ and $C_b$ consist of compact modes with centers
$\{\nu_{a,r}\}_{r=1}^{m_a}$ and
$\{\nu_{b,s}\}_{s=1}^{m_b}$, respectively, and let

\begin{equation}
d_{rs}
=
\|\nu_{a,r}-\nu_{b,s}\|
\end{equation}

denote the distance between mode centers.

Further assume an RBF similarity

\begin{equation}
s_{ij}
=
\exp
\left(
-\frac{\|z_i-z_j\|^2}{2\tau^2}
\right).
\end{equation}

Then Facility Location Mutual Information admits the approximation

\begin{equation}
I_{\mathrm{FL}}(C_a;C_b)
\approx
\sum_{r,s}
w_{rs}
\exp
\left(
-\frac{d_{rs}^2}{2\tau^2}
\right),
\end{equation}

where

\begin{equation}
w_{rs}
=
n_{a,r}\mathbf{1}\{s=s^*(r)\}
+
n_{b,s}\mathbf{1}\{r=r^*(s)\},
\end{equation}

and $s^*(r)$ and $r^*(s)$ denote the nearest neighboring modes of
mode $r$ and mode $s$, respectively.

Consequently, FLQMI is primarily governed by nearest-mode overlap rather
than aggregate overlap across all mode pairs.
\end{theorem}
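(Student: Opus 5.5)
The plan is to compute $I_{\mathrm{FL}}(C_a;C_b)$ directly from the definition and reduce it to a sum over ground-set points of a ``min of two maxima''. For every $i\in V$,
\[
\max_{j\in A\cup B}s_{ij}=\max\Bigl(\max_{j\in A}s_{ij},\,\max_{j\in B}s_{ij}\Bigr),
\]
and $x+y-\max(x,y)=\min(x,y)$. Hence
\[
I_{\mathrm{FL}}(C_a;C_b)=\sum_{i\in V}\min\Bigl(\max_{j\in C_a}s_{ij},\,\max_{j\in C_b}s_{ij}\Bigr).
\]
I take the relevant ground set to be $C_a\cup C_b$, which is the setting in which the pairwise SMI is evaluated. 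I will state this explicitly. Points outside both classes contribute terms of the same form, and these can either be treated as lower order or absorbed in the same way.

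Next I split the sum into $i\in C_a$ and $i\in C_b$. With $s_{ii}=1$, a point $i\in C_a$ has $\max_{j\in C_a}s_{ij}=1$. So its contribution is $\max_{j\in C_b}s_{ij}\le 1$, and it equals its own cross-class term. Symmetrically, $i\in C_b$ contributes $\max_{j\in C_a}s_{ij}$. This gives the exact identity
\[
I_{\mathrm{FL}}(C_a;C_b)=\sum_{i\in C_a}\max_{j\in C_b}s_{ij}+\sum_{i\in C_b}\max_{j\in C_a}s_{ij}.
\]

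Now I use the compact-mode assumption. Let mode $r$ of $C_a$ have $n_{a,r}$ points, all within radius $\epsilon\ll\tau$ of $\nu_{a,r}$, and similarly for the modes of $C_b$. For $i$ in mode $r$ and $j$ in mode $s$, $\|z_i-z_j\|=d_{rs}+O(\epsilon)$. Therefore
\[
\max_{j\in C_b}s_{ij}=\max_s\exp\bigl(-d_{rs}^2/2\tau^2\bigr)\bigl(1+O(\epsilon d_{rs}/\tau^2)\bigr).
\]
Because $\exp$ is decreasing, this maximum is attained at the nearest mode $s^*(r)=\arg\min_s d_{rs}$. Summing over the $n_{a,r}$ points of mode $r$ gives
\[
n_{a,r}\exp\bigl(-d_{r s^*(r)}^2/2\tau^2\bigr).
\]
Doing the same for the $C_b$ side and reindexing both sums as a double sum over $(r,s)$ with indicator weights yields exactly the weights $w_{rs}$ of the statement, up to the $O(\epsilon)$ error.

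The main obstacle is controlling the approximation error uniformly, and in particular handling ties or near-ties in the nearest-mode assignment. If two modes of $C_b$ are nearly equidistant from $\nu_{a,r}$, individual points of mode $r$ could prefer different modes. I would argue that this does not matter to first order: the maximum value changes continuously, and near a tie both candidates give the same exponential up to $O(\epsilon)$. Alternatively, I would assume a nondegeneracy gap $\min_{s\ne s^*}d_{rs}-d_{rs^*}\gg\epsilon$. Finally, I conclude that all non-nearest pairs carry zero weight. This is the sense in which FLQMI is governed by nearest-mode overlap, in contrast to GCMI, which by \eqref{eq:gcmi} weights every cross pair.
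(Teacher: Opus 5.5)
Your proposal is correct and follows the paper's proof. Both arguments reduce $I_{\mathrm{FL}}(C_a;C_b)$ to $\sum_{i\in C_a}\max_{j\in C_b}s_{ij}+\sum_{j\in C_b}\max_{i\in C_a}s_{ij}$, replace each inner maximum by $\exp(-d^2_{rs^*(r)}/2\tau^2)$ using mode compactness, and regroup into the weights $w_{rs}$; the paper's bound $d_{r,s^*(r)}-2\rho\le\min_{j\in C_b}\|z_i-z_j\|\le d_{r,s^*(r)}+2\rho$, like your continuity remark, needs no tie-breaking gap. The one difference is that the paper simply asserts the two-sided form, whereas you derive it from $f(A)+f(B)-f(A\cup B)$ via $x+y-\max(x,y)=\min(x,y)$ with ground set $C_a\cup C_b$ and $s_{ii}=1$, which is a worthwhile addition. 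However, your aside that points of other classes can be ``absorbed in the same way'' is not right: their $\min$-terms depend on distances to other classes' mode centers, not on the $d_{rs}$. Keep $V=C_a\cup C_b$ as an explicit hypothesis.
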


\paragraph{Interpretation.}

Theorem~\ref{thm:flqmi_modes} reveals a fundamental distinction between
Graph Cut and Facility Location based mutual information. While Graph
Cut Mutual Information aggregates contributions from all pairs of modes,
Facility Location Mutual Information is dominated by nearest-mode
interactions. Consequently, GCMI measures aggregate overlap mass between classes,
whereas FLQMI measures localized representational overlap. In
multimodal settings, FLQMI is therefore more sensitive to rare or
poorly separated modes and behaves as a soft nearest-mode separation
criterion. This explains why two pairs of classes may exhibit similar aggregate
overlap yet receive substantially different FLQMI values depending on
how their modes are arranged in the representation space.
\section{Experiments}
\label{sec:experiments}

In this section, we empirically validate the theoretical characterizations developed in Sections~\ref{sec:ti} and~\ref{sec:smi}. Our goal is not to benchmark representation-learning performance, but rather to verify that the proposed information measures exhibit the geometric behavior predicted by the theory.

To this end, we design a collection of controlled synthetic experiments in which individual geometric properties of the representation space can be manipulated independently. Specifically, we construct datasets that vary:

\begin{itemize}
\item within-class variance,
\item covariance structure and anisotropy,
\item class imbalance,
\item centroid separation, and
\item multimodal overlap.
\end{itemize}

For each experiment, we compute the corresponding submodular information measure and compare it against the classical quantity predicted by the theory. A strong monotonic relationship between the two provides empirical support for the theoretical characterization.

Unless otherwise specified, all synthetic datasets are generated from Gaussian mixtures in a $d$-dimensional embedding space. Similarities are computed using either the shifted Euclidean kernel

\begin{equation}
s_{ij}
=
M-|z_i-z_j|^2,
\end{equation}

or the Gaussian RBF kernel

\begin{equation}
s_{ij}
=
\exp
\left(
-\frac{|z_i-z_j|^2}{2\tau^2}
\right),
\end{equation}

depending on the assumptions of the corresponding theorem.

We evaluate the following theoretical predictions:

\begin{enumerate}
\item Graph Cut Total Information recovers within-class variance.
\item LogDet Total Information recovers generalized variance.
\item Graph Cut Total Information and Graph Cut Mutual Information jointly recover aggregate mean separation.
\item Facility Location Total Information induces larger margins for rare classes.
\item LogDet Mutual Information recovers Mahalanobis separation.
\item Facility Location Mutual Information recovers nearest-mode overlap.
\end{enumerate}

We begin by validating the connection between Graph Cut Total Information and classical within-class variance.

\subsection{Experiment 1: Graph Cut Total Information and Within-Class Variance}
\label{sec:exp1}

Theorem~\ref{thm:gcti_variance} states that Graph Cut Total Information with $\lambda=0$ recovers the classical within-class variance when the similarity function is chosen as a shifted Euclidean kernel, $s_{ij}
=
M-|z_i-z_j|^2$.  To validate this result, we generate synthetic Gaussian classes while varying the within-class standard deviation $\sigma$. For each dataset, we compute:

\begin{enumerate}
\item the classical within-class variance,
\item the equivalent pairwise-distance formulation,
\item the quantity recovered by Graph Cut Total Information using the shifted Euclidean kernel, and
\item a Graph Cut objective computed using an RBF kernel.
\end{enumerate}
Table~\ref{tab:gcti_variance} summarizes the results.
\begin{table}[t]
\centering
\caption{Graph Cut Total Information recovers within-class variance exactly under the shifted Euclidean kernel.}
\label{tab:gcti_variance}
\small
\begin{tabular}{ccccc}
\toprule
$\sigma$ &
StdVar &
Pairwise Identity &
GC Shift &
GC RBF \\
\midrule
0.05 & 9.85 & 9.85 & 9.85 & 9.82 \\
0.10 & 40.94 & 40.94 & 40.94 & 40.31 \\
0.20 & 154.51 & 154.51 & 154.51 & 145.86 \\
0.40 & 642.86 & 642.86 & 642.86 & 511.20 \\
0.80 & 2531.28 & 2531.28 & 2531.28 & 1187.79 \\
1.20 & 5617.50 & 5617.50 & 5617.50 & 1474.99 \\
1.60 & 9858.35 & 9858.35 & 9858.35 & 1550.94 \\
2.00 & 15831.65 & 15831.65 & 15831.65 & 1573.80 \\
\bottomrule
\end{tabular}
\end{table}
Several observations follow. First, the classical variance, pairwise-distance identity, and Graph Cut Total Information recovered using the shifted Euclidean kernel are numerically identical across all settings. This provides direct empirical verification of Theorem~\ref{thm:gcti_variance}.

Second, the RBF version exhibits a strong monotonic relationship with variance but does not recover the variance exactly. This behavior is expected since the RBF kernel introduces a nonlinear transformation of pairwise distances. Nevertheless, larger within-class variance consistently produces larger Graph Cut values, indicating that the variance interpretation extends qualitatively beyond the shifted Euclidean setting.

Overall, these results confirm that Graph Cut Total Information recovers classical within-class variance and therefore behaves as a variance-minimization objective in representation learning.

\subsection{Experiment 2: LogDet Total Information and Generalized Variance}
\label{sec:exp2}
Theorem~\ref{thm:logdet_variance} suggests that LogDet Total Information recovers the generalized variance of the class distributions. Unlike Graph Cut Total Information, which measures classical within-class variance, LogDet Total Information incorporates the full covariance structure of the embeddings through the determinant of the covariance matrix.

To validate this result, we generate Gaussian classes while varying the covariance scale parameter $\sigma$. For each dataset, we compute:

\begin{enumerate}
\item the aggregate covariance log-determinant,
\begin{equation}
\sum_c \log \det(\Sigma_c),
\end{equation}

\item the corresponding LogDet Total Information objective, and

\item the classical centroid variance used in Experiment~\ref{sec:exp1}.
\end{enumerate}
Table~\ref{tab:logdet_variance} summarizes the result thaat LogDet TI closely tracks the generalized variance. The corresponding correlation matrix (between LogDet TI, the generalized variance, and the classical variance) is shown in Table~\ref{tab:logdet_corr}.

\begin{table}[t]
\centering
\caption{LogDet Total Information closely tracks generalized variance.}
\label{tab:logdet_variance}
\small
\begin{tabular}{cccc}
\toprule
$\sigma$ &
CovLogDet &
LogDet-TI &
CentroidVar \\
\midrule
0.05 & -303.21 & -2507.71 & 9.89 \\
0.10 & -233.78 & -2366.14 & 39.65 \\
0.20 & -164.16 & -1954.84 & 159.08 \\
0.40 & -95.29 & -1194.24 & 631.07 \\
0.80 & -26.27 & -373.00 & 2521.22 \\
1.20 & 14.57 & -95.03 & 5697.51 \\
1.60 & 42.65 & -23.05 & 10016.39 \\
2.00 & 65.80 & -4.94 & 15847.87 \\
\bottomrule
\end{tabular}
\end{table}

\begin{table}[t]
\centering
\caption{Correlation between generalized variance, LogDet Total Information, and classical variance.}
\label{tab:logdet_corr}
\small
\begin{tabular}{lccc}
\toprule
& CovLogDet & LogDet-TI & CentroidVar \\
\midrule
CovLogDet & 1.000 & 0.984 & 0.803 \\
LogDet-TI & 0.984 & 1.000 & 0.773 \\
CentroidVar & 0.803 & 0.773 & 1.000 \\
\bottomrule
\end{tabular}
\end{table}

Several observations follow. First, LogDet Total Information exhibits an almost perfect monotonic relationship with the covariance log-determinant, achieving a correlation of $0.984$. This provides strong empirical support for Theorem~\ref{thm:logdet_variance}. Second, while LogDet-TI remains correlated with classical variance, the relationship is substantially weaker. This is expected since the determinant captures covariance volume rather than merely the sum of marginal variances. Consequently, LogDet-TI incorporates information about anisotropy and covariance structure that is ignored by standard variance measures.

Overall, these results confirm that LogDet Total Information behaves as a generalized variance objective and captures richer covariance information than Graph Cut Total Information.

\subsection{Experiment 3: Facility Location Total Information Emphasizes Rare and Confusable Classes}
\label{sec:exp3}
Theorem~\ref{thm:flti_rare_confusable} predicts that Facility Location
Total Information assigns larger weight to classes that are both rare
and poorly separated from competing classes. To validate this prediction, we construct a long-tailed dataset
consisting of three classes:
a head class ($n=1000$),
a medium class ($n=300$),
and a tail class ($n=50$).
The tail class is positioned near the head class, making it highly
confusable, while the medium class is placed farther away. Table~\ref{tab:flti_longtail} reports the per-class contributions of
GC-TI, LogDet-TI, and FL-TI.

\begin{table}[t]
\centering
\caption{Per-class contribution shares on a long-tailed dataset.}
\label{tab:flti_longtail}
\small
\begin{tabular}{lccc}
\toprule
Class &
FL-TI &
GC-TI &
LogDet-TI \\
\midrule
Head ($n=1000$) & 5.5\% & 91.5\% & 32.7\% \\
Medium ($n=300$) & 1.0\% & 8.2\% & 33.4\% \\
Tail ($n=50$) & \textbf{93.5\%} & 0.2\% & 33.9\% \\
\bottomrule
\end{tabular}
\end{table}

Several observations follow.
\begin{itemize}
    \item First, GC-TI is dominated by the head class, reflecting its connection
to within-class variance and class cardinality.
\item Second, LogDet-TI distributes importance nearly uniformly across
classes, since all classes possess similar covariance structure.
\item In contrast, FL-TI assigns more than 93\% of its total contribution to
the rare and confusable tail class. This behavior is precisely
predicted by Theorem~\ref{thm:flti_rare_confusable}, which shows that
the influence of a class scales as
$(N-n_c)\exp(-\delta_c^2/(2\tau^2))$.
\end{itemize}
These results provide a theoretical explanation for the strong
performance of Facility Location based SCORE objectives in long-tailed
recognition settings. Unlike variance-based objectives, FL-TI naturally
prioritizes underrepresented classes without requiring explicit class
reweighting.

\subsection{Experiment 4: Aggregate Mean Separation and the GC-SCORE Objective}
\label{sec:exp4}

Theorem~\ref{thm:aggregate_gcmi_separation} establishes a direct connection between classical discriminative representation learning and Submodular Information Measures. In particular, it shows that the aggregate centroid-separation objective

\begin{equation}
\mathcal{D}_{\mathrm{mean}}
=
\sum_{a<b}
n_a n_b
|\mu_a-\mu_b|^2
\end{equation}

admits an exact decomposition into a weighted Graph Cut Total Information term and a pairwise Graph Cut Mutual Information term.

To validate this result, we generate synthetic datasets with increasing class separation and compute:

\begin{enumerate}
\item the aggregate mean-separation objective,
\item the weighted Graph Cut Total Information term,
\item the pairwise Graph Cut Mutual Information term, and
\item their sum predicted by Theorem~\ref{thm:aggregate_gcmi_separation}.
\end{enumerate}

Table~\ref{tab:gcmi_mean_sep} summarizes the results.

\begin{table}[t]
\centering
\caption{Verification of the exact decomposition of aggregate mean separation into GC-TI and GCMI components.}
\label{tab:gcmi_mean_sep}
\small
\begin{tabular}{ccccc}
\toprule
Mean Sep &
$-\mathcal{D}_{\mathrm{mean}}$ &
GC-TI &
GCMI &
GC-TI + GCMI \\
\midrule
1 & -79,303 & -753,619 & 674,315 & -79,303 \\
2 & -295,956 & -1,299,882 & 1,003,926 & -295,956 \\
4 & -1,225,091 & -2,676,234 & 1,451,144 & -1,225,091 \\
6 & -2,773,310 & -5,041,493 & 2,268,183 & -2,773,310 \\
8 & -4,906,642 & -7,798,381 & 2,891,739 & -4,906,642 \\
10 & -7,730,433 & -11,006,376 & 3,275,943 & -7,730,433 \\
\bottomrule
\end{tabular}
\end{table}

To quantify the quality of the decomposition, we compare the centered versions of the two objectives. The absolute reconstruction error is shown in Table~\ref{tab:gcmi_error}.

\begin{table}[t]
\centering
\caption{Reconstruction error of the GC-TI + GCMI decomposition.}
\label{tab:gcmi_error}
\small
\begin{tabular}{ccc}
\toprule
Mean Sep &
Centered Error \\
\midrule
1 & $0.0$ \
2 & $8.7\times10^{-11}$ \\
4 & $2.3\times10^{-10}$ \\
6 & $1.4\times10^{-9}$ \\
8 & $1.9\times10^{-9}$ \\
10 & $3.7\times10^{-9}$ \\
\bottomrule
\end{tabular}
\end{table}

The decomposition is exact up to numerical precision, with reconstruction errors on the order of $10^{-9}$. Furthermore, the correlation between the theoretical objective $-\mathcal{D}_{\mathrm{mean}}$ and the recovered GC-SCORE objective is equal to $1.0$.

These results provide direct empirical verification of Theorem~\ref{thm:aggregate_gcmi_separation}. More importantly, they reveal that the GC-based SCORE objective is not merely correlated with classical discriminative objectives but exactly recovers the aggregate centroid-separation criterion. Consequently, Graph Cut Total Information and Graph Cut Mutual Information together admit a precise geometric interpretation as a decomposition of classical mean-separation based representation learning.

\subsection{Experiment 5: LogDet Mutual Information and Mahalanobis Separation}
\label{sec:exp5}

Theorem~\ref{thm:logdetmi_mahalanobis} predicts that LogDet Mutual Information is a monotone decreasing function of Mahalanobis separation. Unlike Graph Cut Mutual Information, which depends primarily on centroid separation, LogDetMI additionally incorporates covariance structure and anisotropy through the pooled covariance matrix.

To validate this characterization, we perform two complementary experiments.

\paragraph{Experiment 5A: Varying Centroid Separation.}

We first generate two Gaussian classes and vary the distance between their centroids while keeping the covariance structure fixed. Here $\delta
=
\|\mu_a-\mu_b\|$. For each setting, we compute the covariance-based LogDetMI approximation and the corresponding Mahalanobis separation. Table~\ref{tab:logdetmi_sep} summarizes the results.

\begin{table}[t]
\centering
\caption{LogDetMI as a function of centroid separation.}
\label{tab:logdetmi_sep}
\small
\begin{tabular}{ccc}
\toprule
$\delta$ &
LogDetMI &
Mahalanobis Separation \\
\midrule
0.5 & -0.270 & 0.097 \\
1.0 & -0.320 & 0.307 \\
2.0 & -0.491 & 1.109 \\
3.0 & -0.719 & 2.420 \\
4.0 & -0.969 & 4.241 \\
6.0 & -1.457 & 9.411 \\
8.0 & -1.887 & 16.619 \\
\bottomrule
\end{tabular}
\end{table}

The resulting correlations are $\mathrm{corr}(\mathrm{LogDetMI},\mathcal{M}) = -0.968$ and $\mathrm{corr}(\delta,\mathrm{LogDetMI}) = -0.979$. These results confirm the strong monotonic relationship predicted by Theorem~\ref{thm:logdetmi_mahalanobis}.

\paragraph{Experiment 5B: Varying Covariance Orientation.}

Next, we fix the centroid separation while varying the relative orientation of the covariance ellipses. This experiment isolates the covariance-dependent component of Mahalanobis separation. Table~\ref{tab:logdetmi_angle} summarizes the results.

\begin{table}[t]
\centering
\caption{Effect of covariance orientation on LogDetMI.}
\label{tab:logdetmi_angle}
\small
\begin{tabular}{ccc}
\toprule
Angle &
LogDetMI &
Mahalanobis Separation \\
\midrule
$0^\circ$ & -0.673 & 2.369 \\
$15^\circ$ & -1.002 & 2.818 \\
$30^\circ$ & -1.596 & 3.582 \\
$45^\circ$ & -2.072 & 3.987 \\
$60^\circ$ & -2.393 & 4.178 \\
$90^\circ$ & -2.636 & 4.295 \\
\bottomrule
\end{tabular}
\end{table}
The correlation between LogDetMI and Mahalanobis separation in this setting is $\mathrm{corr}(\mathrm{LogDetMI},\mathcal{M}) = -0.906$. Notably, substantial changes in LogDetMI occur despite the centroid distance remaining fixed. This behavior cannot be explained by Euclidean separation alone and directly reflects the covariance-aware nature of LogDetMI.

Taken together, Experiments 5A and 5B provide strong empirical support for Theorem~\ref{thm:logdetmi_mahalanobis}. While Graph Cut Mutual Information captures centroid separation, LogDetMI captures Mahalanobis separation and therefore incorporates covariance structure, anisotropy, and orientation into the representation-learning objective.

\subsection{Experiment 6: FLQMI and Multimodal Representational Overlap}
\label{sec:exp_flqmi}

Theorem~\ref{thm:flqmi_modes} predicts that Facility Location Mutual
Information is governed by nearest-mode overlap rather than aggregate
cross-class similarity. To validate this characterization, we construct
a synthetic multimodal dataset in which aggregate overlap is held
constant while representational overlap is varied.

Class $A$ consists of four equally-sized modes. We consider two
configurations of class $B$:

\begin{enumerate}
\item \textbf{Concentrated Overlap}: the overlap between the two classes
is concentrated around a single mode of $A$.

\item \textbf{Distributed Overlap}: the same amount of overlap is
distributed across all four modes of $A$.
\end{enumerate}

By construction, the total cross-class similarity is nearly identical
between the two settings. However, the distributed configuration
provides representatives for a much larger fraction of the modes of
class $A$.

Table~\ref{tab:flqmi_multimodal} summarizes the results.

\begin{table}[t]
\centering
\caption{FLQMI distinguishes distributed and concentrated overlap despite nearly identical aggregate overlap.}
\label{tab:flqmi_multimodal}
\small
\begin{tabular}{lcccc}
\toprule
Setting &
GCMI &
FLQMI &
$A\!\rightarrow\!B$ Coverage &
$B\!\rightarrow\!A$ Coverage \\
\midrule
Concentrated &
3384.78 &
129.15 &
79.39 &
49.76 \\
Distributed &
3384.52 &
439.35 &
389.58 &
49.77 \\
\bottomrule
\end{tabular}
\end{table}

Several observations follow. First, GCMI remains essentially unchanged
across the two settings, indicating that aggregate overlap mass is
nearly identical. Consequently, Graph Cut Mutual Information is unable
to distinguish between concentrated and distributed overlap.

In contrast, FLQMI increases from $129.15$ to $439.35$, a factor of
approximately $3.4\times$. This increase mirrors the dramatic rise in
representational coverage from class $A$ to class $B$, while the
reverse coverage remains nearly constant.

These results provide strong empirical support for
Theorem~\ref{thm:flqmi_modes}. While GCMI measures aggregate overlap
across all mode pairs, FLQMI is sensitive to how that overlap is
distributed among the modes. Consequently, FLQMI behaves as a
nearest-mode or representational-overlap criterion, making it
particularly well-suited for multimodal representation learning.
\section{Conclusion}
\label{sec:conclusion}

Submodular Information Measures have recently emerged as powerful objectives for representation learning, subset selection, and information maximization. Despite their empirical success, the geometric and statistical properties induced by different information measures have remained poorly understood. In this work, we developed a unified theoretical framework connecting Submodular Information Measures to classical notions of variance, covariance, separation, and representational overlap.

Our analysis revealed that different information measures induce fundamentally different representation-learning biases. For Total Information objectives, we showed that Graph Cut Total Information recovers classical within-class variance, LogDet Total Information recovers generalized variance through covariance volume, and Facility Location Total Information naturally emphasizes rare and confusable classes by inducing imbalance-aware separation margins. For Mutual Information objectives, we established connections between Graph Cut Mutual Information and aggregate centroid separation, LogDet Mutual Information and Mahalanobis separation, and Facility Location Mutual Information and nearest-mode representational overlap.

Perhaps most notably, we showed that the combination of Graph Cut Total Information and Graph Cut Mutual Information admits an exact decomposition of the classical aggregate mean-separation objective. This result provides a direct bridge between Submodular Information Measures and classical discriminative representation learning, revealing that certain SCORE objectives recover well-studied statistical criteria exactly rather than approximately.

We further validated these theoretical predictions through a suite of controlled synthetic experiments. Across all settings, the empirical behavior of the corresponding information measures closely matched the proposed theoretical characterizations, providing strong evidence that the geometric interpretations developed in this work accurately describe the behavior of SIM-based objectives.

Beyond providing theoretical insight, our results offer practical guidance for selecting information measures in representation learning. Variance-oriented objectives such as Graph Cut are well suited for compactness and class separation, LogDet based objectives provide covariance-aware discrimination, and Facility Location based objectives naturally emphasize multimodal structure, rare classes, and representational coverage. We believe that these insights will facilitate the principled design of future submodular objectives and contribute to a deeper understanding of combinatorial approaches to representation learning.

An important direction for future work is extending these analyses beyond supervised representation learning to settings such as self-supervised learning, retrieval-augmented generation, and multimodal foundation models, where Submodular Information Measures have recently demonstrated strong empirical success. More broadly, we hope that the theoretical framework developed in this paper serves as a foundation for understanding and designing the next generation of representation learning, self-supervised learning, and multi-modal learning objectives guided by combinatorial objective functions.

\bibliographystyle{plainnat}
\bibliography{reference}

@inproceedings{lsl,
    Author = {Hyun Oh Song and Yu Xiang and Stefanie Jegelka and Silvio Savarese},
    Title = {Deep Metric Learning via Lifted Structured Feature Embedding},
    Booktitle = {Computer Vision and Pattern Recognition (CVPR)},
    Year = {2016}
}

@inproceedings{n_pairs,
 author = {Sohn, Kihyuk},
 booktitle = {Advances in Neural Inf. Processing Systems},
 title = {Improved Deep Metric Learning with Multi-class N-pair Loss Objective},
 year = {2016}
}

@inproceedings{supcon2020,
 author = {Khosla, Prannay and Teterwak, Piotr and Wang, Chen and Sarna, Aaron and Tian, Yonglong and Isola, Phillip and Maschinot, Aaron and Liu, Ce and Krishnan, Dilip},
 booktitle = {Advances in Neural Information Processing Systems},
 title = {Supervised Contrastive Learning},
 year = {2020}
}

@article{simclr2020,
  title={A Simple Framework for Contrastive Learning of Visual Representations},
  author={Chen, Ting and Kornblith, Simon and Norouzi, Mohammad and Hinton, Geoffrey},
  journal={Intl. Conf. on Machine Learning (ICML)},
  year={2020}
}

@inproceedings{snn,
  title={Analyzing and Improving Representations with the Soft Nearest Neighbor Loss},
  author={Nicholas Frosst and Nicolas Papernot and Geoffrey E. Hinton},
  booktitle={International Conference on Machine Learning},
  year={2019}
}

@ARTICLE{alt_gen_submod,
  author={Iyer, Rishabh and Khargonkar, Ninad and Bilmes, Jeff and Asnani, Himanshu},
  journal={IEEE Transactions on Information Theory}, 
  title={Generalized Submodular Information Measures: Theoretical Properties, Examples, Optimization Algorithms, and Applications}, 
  year={2022},
  volume={68},
  number={2},
  pages={752-781}
}

@phdthesis{iyer2015submodular,
  title={Submodular optimization and machine learning: Theoretical results, unifying and scalable algorithms, and applications},
  author={Iyer, Rishabh Krishnan},
  year={2015}
}

@inproceedings{kaushal2019demystifying,
  title={Demystifying multi-faceted video summarization: tradeoff between diversity, representation, coverage and importance},
  author={Kaushal, Vishal and Iyer, Rishabh and Doctor, Khoshrav and Sahoo, Anurag and Dubal, Pratik and Kothawade, Suraj and Mahadev, Rohan and Dargan, Kunal and Ramakrishnan, Ganesh},
  booktitle={2019 IEEE Winter Conference on Applications of Computer Vision (WACV)},
  pages={452--461},
  year={2019},
  organization={IEEE}
}

@inproceedings{prism,
  author       = {Suraj Kothawade and
                  Vishal Kaushal and
                  Ganesh Ramakrishnan and
                  Jeff A. Bilmes and
                  Rishabh K. Iyer},
  title        = {{PRISM:} {A} Rich Class of Parameterized Submodular Information Measures
                  for Guided Data Subset Selection},
  booktitle    = {Thirty-Sixth {AAAI} Conference on Artificial Intelligence, {AAAI}},
  pages        = {10238--10246},
  year         = {2022}
}

@inproceedings{talisman,
  author       = {Suraj Kothawade and
                  Saikat Ghosh and
                  Sumit Shekhar and
                  Yu Xiang and
                  Rishabh K. Iyer},
  title        = {Talisman: Targeted Active Learning for Object Detection with Rare
                  Classes and Slices Using Submodular Mutual Information},
  booktitle    = {Computer Vision - {ECCV} 2022 - 17th European Conference},
  year         = {2022}
}

@article{similar,
  title={{SIMILAR}: Submodular information measures based active learning in realistic scenarios},
  author={Kothawade, Suraj and Beck, Nathan and Killamsetty, Krishnateja and Iyer, Rishabh},
  journal={Advances in Neural Information Processing Systems},
  volume={34},
  year={2021}
}

@InProceedings{wei15_subset,
  title = 	 {Submodularity in Data Subset Selection and Active Learning},
  author = 	 {Wei, Kai and Iyer, Rishabh and Bilmes, Jeff},
  booktitle = 	 {ICML},
  year = 	 {2015}
}

@inproceedings{Kaushal_2019,
   title={Learning From Less Data: A Unified Data Subset Selection and Active Learning Framework for Computer Vision},
   booktitle={2019 IEEE Winter Conference on Applications of Computer Vision (WACV)},
   author={Kaushal, Vishal and Iyer, Rishabh and Kothawade, Suraj and Mahadev, Rohan and Doctor, Khoshrav and Ramakrishnan, Ganesh},
   year={2019}
}

@inproceedings{submod_diversity,
    title = "A Class of Submodular Functions for Document Summarization",
    author = "Lin, Hui  and
      Bilmes, Jeff",
    booktitle = "Proceedings of the 49th Annual Meeting of the Association for Computational Linguistics: Human Language Technologies",
    year = "2011",
}

@article{Kulesza_2012,
   title={Determinantal Point Processes for Machine Learning},
   volume={5},
   ISSN={1935-8245},
   number={2–3},
   journal={Foundations and Trends® in Machine Learning},
   author={Kulesza, Alex},
   year={2012},
   pages={123–286} 
}

@article{celoss,
  author = {Rumelhart, David E and Hinton, Geoffrey E and Williams, Ronald J},
  journal = {nature},
  number = 6088,
  pages = {533--536},
  title = {Learning representations by back-propagating errors},
  volume = 323,
  year = 1986
}

@book{fujishige2005submodular,
  title={Submodular functions and optimization},
  author={Fujishige, Satoru},
  year={2005},
  publisher={Elsevier}
}

@inproceedings{orient,
 author = {Karanam, Athresh and Killamsetty, Krishnateja and Kokel, Harsha and Iyer, Rishabh},
 booktitle = {Advances in Neural Information Processing Systems},
 title = {ORIENT: Submodular Mutual Information Measures for Data Subset Selection under Distribution Shift},
 volume = {35},
 year = {2022}
}

@inproceedings{resnet,
  author    = {He, Kaiming and Zhang, Xiangyu and Ren, Shaoqing and Sun, Jian},
  title     = {Deep Residual Learning for {Image} {Recognition}},
  booktitle = {IEEE Conf. on Computer Vision and Pattern Recognition (CVPR)},
  year      = {2016}
}

@inproceedings{vgg,
  author    = {Karen Simonyan and Andrew Zisserman},
  title     = {Very Deep Convolutional Networks for Large-Scale Image Recognition},
  booktitle = {Intl. Conf. on Learning Representations},
  year      = {2015}
}

@inproceedings{alexnet,
 author = {Krizhevsky, Alex and Sutskever, Ilya and Hinton, Geoffrey E},
 booktitle = {Advances in Neural Information Processing Systems},
 title = {ImageNet Classification with Deep Convolutional Neural Networks},
 year = {2012}
}

@INPROCEEDINGS{moco,
  author={He, Kaiming and Fan, Haoqi and Wu, Yuxin and Xie, Saining and Girshick, Ross},
  booktitle={2020 IEEE/CVF Conference on Computer Vision and Pattern Recognition (CVPR)}, 
  title={Momentum Contrast for Unsupervised Visual Representation Learning}, 
  year={2020}
}

@inproceedings{arcface,
    title={Arcface: Additive angular margin loss for deep face recognition},
    author={Deng, Jiankang and Guo, Jia and Xue, Niannan and Zafeiriou, Stefanos},
    booktitle={Proceedings of the IEEE/CVF Conference on Computer Vision and Pattern Recognition},
    pages={4690--4699},
    year={2019}
}

@InProceedings{cosface,
author = {Wang, Hao and Wang, Yitong and Zhou, Zheng and Ji, Xing and Gong, Dihong and Zhou, Jingchao and Li, Zhifeng and Liu, Wei},
title = {CosFace: Large Margin Cosine Loss for Deep Face Recognition},
booktitle = {Proceedings of the IEEE Conference on Computer Vision and Pattern Recognition (CVPR)},
month = {June},
year = {2018}
}

@InProceedings{centerloss,
author="Wen, Yandong
and Zhang, Kaipeng
and Li, Zhifeng
and Qiao, Yu",
editor="Leibe, Bastian
and Matas, Jiri
and Sebe, Nicu
and Welling, Max",
title="A Discriminative Feature Learning Approach for Deep Face Recognition",
booktitle="Computer Vision -- ECCV 2016",
year="2016",
pages="499--515"
}

@INPROCEEDINGS{triplet,
  author={Schroff, Florian and Kalenichenko, Dmitry and Philbin, James},
  booktitle={IEEE Conf. on Computer Vision and Pattern Recognition (CVPR)}, 
  title={FaceNet: A unified embedding for face recognition and clustering}, 
  year={2015}
}

@article{fisher1936,
  author={Fisher, Ronald A.},
  title={The use of multiple measurements in taxonomic problems},
  journal={Annals of Eugenics},
  year={1936}
}

@article{bardes2021vicreg,
  title={Vicreg: Variance-invariance-covariance regularization for self-supervised learning},
  author={Bardes, Adrien and Ponce, Jean and LeCun, Yann},
  journal={arXiv preprint arXiv:2105.04906},
  year={2021}
}

@article{majee2024score,
  title={Score: Submodular combinatorial representation learning},
  author={Majee, Anay and Kothawade, Suraj and Killamsetty, Krishnateja and Iyer, Rishabh},
  journal={In ICML},
  year={2024}
}

@inproceedings{majee2024smile,
  title={SMILe: Leveraging submodular mutual information for robust few-shot object detection},
  author={Majee, Anay and Sharp, Ryan and Iyer, Rishabh},
  booktitle={European Conference on Computer Vision},
  pages={350--366},
  year={2024},
  organization={Springer}
}

@article{mercer1909functions,
  title={Functions of positive and negative type, and their connection with the theory of integral equations},
  author={Mercer, James},
  journal={Philosophical Transactions of the Royal Society of London. Series A},
  volume={209},
  pages={415--446},
  year={1909}
}

@inproceedings{scholkopf1998nonlinear,
  title={Nonlinear Component Analysis as a Kernel Eigenvalue Problem},
  author={Sch{\"o}lkopf, Bernhard and Smola, Alexander and M{\"u}ller, Klaus-Robert},
  booktitle={Neural Computation},
  volume={10},
  number={5},
  pages={1299--1319},
  year={1998}
}

@article{steinwart2012mercer,
  title={Mercer's Theorem on General Domains: On the Interaction between Measures, Kernels, and RKHSs},
  author={Steinwart, Ingo and Scovel, Clint},
  journal={Constructive Approximation},
  volume={35},
  pages={363--417},
  year={2012}
}

@article{pham2026sma,
  title={SMA: Submodular Modality Aligner For Data Efficient Multimodal Learning},
  author={Pham, Truong and Majee, Anay and Iyer, Rishabh},
  journal={arXiv preprint arXiv:2605.12872},
  year={2026}
}

@article{majee2026shasam,
  title={SHaSaM: Submodular Hard Sample Mining for Fair Facial Attribute Recognition},
  author={Majee, Anay and Iyer, Rishabh},
  journal={In Proc. WACV},
  year={2026}
}

@article{majee2026looking,
  title={Looking Beyond the Known: Towards a Data Discovery Guided Open-World Object Detection},
  author={Majee, Anay and Gangrade, Amitesh and Iyer, Rishabh},
  journal={Advances in Neural Information Processing Systems},
  volume={38},
  pages={25050--25078},
  year={2026}
}

\appendix

\section{Proofs of Theoretical Results}
\label{app:proofs}

In this appendix, we provide proofs of the theoretical results presented in the main paper.

\subsection{Proof of Theorem~\ref{thm:gcti_variance}}
\label{app:proof_gcti_variance}
\begin{proof}
For $\lambda=0$, the Graph-Cut Total Information is

\begin{equation}
\mathrm{TI}_{\mathrm{GC}}(C_1,\ldots,C_K)
=
-
\sum_{c=1}^{K}
\sum_{k,l\in C_c}
s_{kl}.
\end{equation}

Substituting $s_{kl}=M-\|z_k-z_l\|^2$ gives

\begin{align}
\mathrm{TI}_{\mathrm{GC}}(C_1,\ldots,C_K)
&=
-
\sum_{c=1}^{K}
\sum_{k,l\in C_c}
\left(
M-\|z_k-z_l\|^2
\right) \\
&=
-
M\sum_{c=1}^{K}|C_c|^2
+
\sum_{c=1}^{K}
\sum_{k,l\in C_c}
\|z_k-z_l\|^2 .
\end{align}

The first term is independent of the representations and is therefore an additive constant. Hence,

\begin{equation}
\mathrm{TI}_{\mathrm{GC}}(C_1,\ldots,C_K)
=
\sum_{c=1}^{K}
\sum_{k,l\in C_c}
\|z_k-z_l\|^2
+
\mathrm{const}.
\end{equation}

Using the standard identity

\begin{equation}
\sum_{k,l\in C_c}
\|z_k-z_l\|^2
=
2|C_c|
\sum_{k\in C_c}
\|z_k-\mu_c\|^2,
\end{equation}

we obtain

\begin{equation}
\mathrm{TI}_{\mathrm{GC}}(C_1,\ldots,C_K)
=
2
\sum_{c=1}^{K}
|C_c|
\sum_{k\in C_c}
\|z_k-\mu_c\|^2
+
\mathrm{const}.
\end{equation}

Thus, Graph-Cut Total Information with $\lambda=0$ recovers the classical within-class variance up to a class-dependent scaling factor and an additive constant.
\end{proof}

\subsection{Proof of Theorem~\ref{thm:logdet_variance}}
\label{app:proof_logdet_variance}
\begin{proof}
We first prove the result for the covariance, or linear, kernel, and then
describe the extension to general positive-definite kernels.

Let $Z_c \in \mathbb{R}^{n_c \times d}$ denote the centered data matrix
for class $C_c$, whose rows are $z_i-\mu_c$. The empirical covariance is

\begin{equation}
\widehat{\Sigma}_c
=
\frac{1}{n_c}
Z_c^T Z_c .
\end{equation}

For the linear kernel,

\begin{equation}
K_{C_c}
=
Z_c Z_c^T .
\end{equation}

Since $Z_cZ_c^T$ and $Z_c^TZ_c$ have the same nonzero eigenvalues, the
nonzero determinant, or pseudo-determinant, satisfies

\begin{align}
\log \operatorname{pdet}(K_{C_c})
&=
\log \operatorname{pdet}(Z_cZ_c^T) \\
&=
\log \det(Z_c^TZ_c) \\
&=
\log \det(n_c \widehat{\Sigma}_c) \\
&=
d\log n_c
+
\log \det(\widehat{\Sigma}_c).
\end{align}

Under standard regularity assumptions, the empirical covariance
$\widehat{\Sigma}_c$ converges to the population covariance
$\Sigma_c$ as $n_c\to\infty$. Therefore,

\begin{equation}
\log \operatorname{pdet}(K_{C_c})
=
d\log n_c
+
\log \det(\Sigma_c)
+
o(1).
\end{equation}

Thus, for the linear kernel, the claim holds with

\begin{equation}
a_c=d\log n_c,
\qquad
b_c=1.
\end{equation}

For more general positive-definite kernels, the kernel matrix
$K_{C_c}$ is an empirical discretization of the kernel covariance or
integral operator associated with the class-conditional distribution.
Under the usual assumptions for Mercer kernels, the empirical spectrum
of $K_{C_c}$ converges to the spectrum of the corresponding population
operator. This is the standard operator-theoretic view underlying kernel
PCA and Mercer decompositions \citep{scholkopf1998nonlinear,
mercer1909functions,steinwart2012mercer}.

For smooth radial kernels, local changes in the covariance volume of
the distribution induce monotone changes in the volume spanned by the
corresponding feature embeddings. Hence, in the large-sample regime,
the LogDet objective admits the asymptotic form

\begin{equation}
\log \det(K_{C_c})
=
a_c
+
b_c
\log \det(\Sigma_c)
+
o(1),
\end{equation}

for constants $a_c$ and $b_c>0$ depending on the kernel, embedding
dimension, and sample size.

Summing over classes gives

\begin{align}
TI_{\mathrm{LD}}
&=
\sum_{c=1}^{K}
\log \det(K_{C_c}) \\
&=
\sum_{c=1}^{K}
\left(
a_c
+
b_c\log \det(\Sigma_c)
\right)
+
o(1).
\end{align}

Since $b_c>0$, LogDet Total Information is a monotone transformation of
the aggregate generalized variance

\begin{equation}
\sum_{c=1}^{K}
\log \det(\Sigma_c).
\end{equation}

This proves the claim.
\end{proof}

\subsection{Proof of Theorem~\ref{thm:flti_rare_confusable}}
\label{app:proof_flti_margin}
\begin{proof}
Recall that Facility Location Total Information is

\begin{equation}
TI_{\mathrm{FL}}
=
\sum_{c=1}^{K}
\sum_{i\in V}
\max_{j\in C_c}
s_{ij}.
\end{equation}

Since similarities are normalized so that $s_{ii}=1$, for each
$i\in C_c$ we have

\begin{equation}
\max_{j\in C_c}s_{ij}=1.
\end{equation}

Therefore,

\begin{align}
TI_{\mathrm{FL}}
&=
\sum_{c=1}^{K}
\left[
\sum_{i\in C_c}
\max_{j\in C_c}
s_{ij}
+
\sum_{i\in V\setminus C_c}
\max_{j\in C_c}
s_{ij}
\right] \\
&=
N
+
\sum_{c=1}^{K}
\sum_{i\in V\setminus C_c}
\max_{j\in C_c}
s_{ij}.
\end{align}

Thus, the non-constant contribution of class $C_c$ is

\begin{equation}
\mathrm{Influence}(C_c)
=
\sum_{i\in V\setminus C_c}
\max_{j\in C_c}
s_{ij}.
\end{equation}

Under the RBF kernel,

\begin{equation}
\max_{j\in C_c}s_{ij}
=
\exp
\left(
-\frac{
\min_{j\in C_c}\|z_i-z_j\|^2
}{
2\tau^2
}
\right).
\end{equation}

Since class $C_c$ is assumed to be compact, the nearest point in
$C_c$ to an external point from its nearest competing class lies at
distance comparable to $\delta_c$. Points from classes farther away
contribute exponentially less. Therefore,

\begin{equation}
\sum_{i\in V\setminus C_c}
\max_{j\in C_c}
s_{ij}
=
\Theta
\left(
(N-n_c)
\exp
\left(
-\frac{\delta_c^2}{2\tau^2}
\right)
\right).
\end{equation}

This proves the stated scaling for the influence of class $C_c$.

Now suppose two classes $C_a$ and $C_b$ contribute equally to the
FL-TI objective. Then

\begin{equation}
(N-n_a)
\exp
\left(
-\frac{\delta_a^2}{2\tau^2}
\right)
=
(N-n_b)
\exp
\left(
-\frac{\delta_b^2}{2\tau^2}
\right).
\end{equation}

Taking logarithms gives

\begin{equation}
\log(N-n_a)
-
\frac{\delta_a^2}{2\tau^2}
=
\log(N-n_b)
-
\frac{\delta_b^2}{2\tau^2}.
\end{equation}

Rearranging,

\begin{equation}
\delta_a^2-\delta_b^2
=
2\tau^2
\log
\left(
\frac{N-n_a}{N-n_b}
\right).
\end{equation}

If $n_a<n_b$, then $N-n_a>N-n_b$, and hence

\begin{equation}
\log
\left(
\frac{N-n_a}{N-n_b}
\right)
>0.
\end{equation}

Therefore,

\begin{equation}
\delta_a^2>\delta_b^2,
\end{equation}

which implies $\delta_a>\delta_b$. Thus, for the same effective
contribution to the FL-TI objective, smaller classes require larger
separation margins.
\end{proof}

\subsection{Proof of Theorem~\ref{thm:aggregate_gcmi_separation}}
\label{app:proof_gc_mean_sep}
\begin{proof}
For two classes $C_a$ and $C_b$, let

\begin{equation}
W(C_a)
=
\sum_{i,j\in C_a}
\|z_i-z_j\|^2
\end{equation}

denote the pairwise within-class scatter. Using the standard identity

\begin{equation}
\sum_{i\in C_a}
\sum_{j\in C_b}
\|z_i-z_j\|^2
=
n_b
\sum_{i\in C_a}
\|z_i-\mu_a\|^2
+
n_a
\sum_{j\in C_b}
\|z_j-\mu_b\|^2
+
n_a n_b
\|\mu_a-\mu_b\|^2,
\end{equation}

and

\begin{equation}
W(C_a)
=
2n_a
\sum_{i\in C_a}
\|z_i-\mu_a\|^2,
\end{equation}

we obtain

\begin{align}
n_a n_b
\|\mu_a-\mu_b\|^2
&=
\sum_{i\in C_a}
\sum_{j\in C_b}
\|z_i-z_j\|^2
-
\frac{n_b}{2n_a}
W(C_a)
-
\frac{n_a}{2n_b}
W(C_b).
\end{align}

Summing over all pairs $a<b$ gives

\begin{align}
\mathcal{D}_{\mathrm{mean}}
&=
\sum_{a<b}
\sum_{i\in C_a}
\sum_{j\in C_b}
\|z_i-z_j\|^2
-
\sum_{c=1}^{K}
\frac{N-n_c}{2n_c}
W(C_c).
\label{eq:aggregate_mean_sep_proof}
\end{align}

Therefore,

\begin{align}
-\mathcal{D}_{\mathrm{mean}}
&=
-
\sum_{a<b}
\sum_{i\in C_a}
\sum_{j\in C_b}
\|z_i-z_j\|^2
+
\sum_{c=1}^{K}
\frac{N-n_c}{2n_c}
W(C_c).
\label{eq:negative_aggregate_mean_sep}
\end{align}

Now substitute the shifted Euclidean similarity

\begin{equation}
s_{ij}
=
M-\|z_i-z_j\|^2.
\end{equation}

For the cross-class term,

\begin{align}
-
\sum_{a<b}
\sum_{i\in C_a}
\sum_{j\in C_b}
\|z_i-z_j\|^2
&=
\sum_{a<b}
\sum_{i\in C_a}
\sum_{j\in C_b}
s_{ij}
+
\mathrm{const}.
\end{align}

This is precisely the pairwise Graph Cut Mutual Information term, up to
the convention-dependent constant factor in the definition of
$I_{\mathrm{GC}}$.

Similarly, by Theorem~\ref{thm:gcti_variance}, for $\lambda=0$,

\begin{equation}
-\sum_{i,j\in C_c}s_{ij}
=
W(C_c)
+
\mathrm{const}.
\end{equation}

Hence the second term in
Equation~(\ref{eq:negative_aggregate_mean_sep}) is a class-size weighted
Graph Cut Total Information term:

\begin{equation}
\sum_{c=1}^{K}
\frac{N-n_c}{2n_c}
W(C_c)
=
\sum_{c=1}^{K}
\frac{N-n_c}{2n_c}
\left(
-\sum_{i,j\in C_c}s_{ij}
\right)
+
\mathrm{const}.
\end{equation}

Combining the two parts, we obtain

\begin{equation}
-\mathcal{D}_{\mathrm{mean}}
=
\alpha_{\mathrm{TI}}
\,
\widetilde{TI}_{\mathrm{GC}}^{\lambda=0}
+
\alpha_{\mathrm{MI}}
\sum_{a<b}
I_{\mathrm{GC}}(C_a;C_b)
+
\mathrm{const},
\end{equation}

where $\widetilde{TI}_{\mathrm{GC}}^{\lambda=0}$ denotes the
class-size weighted Graph Cut Total Information term and
$\alpha_{\mathrm{TI}},\alpha_{\mathrm{MI}}>0$ depend only on the
normalization convention.

In the balanced-class case, where $n_c=n$ for all $c$, the class-size
weights satisfy

\begin{equation}
\frac{N-n_c}{2n_c}
=
\frac{K-1}{2},
\end{equation}

which is constant across classes. Therefore,
$\widetilde{TI}_{\mathrm{GC}}^{\lambda=0}$ reduces to a positive scalar
multiple of the standard unweighted
$TI_{\mathrm{GC}}^{\lambda=0}$.

Thus, minimizing the negative aggregate mean-separation objective is
equivalent, up to positive scaling and additive constants, to minimizing
a GC-TI term together with the pairwise GCMI term. This proves the
claim.
\end{proof}

\subsection{Proof of Theorem~\ref{thm:logdetmi_mahalanobis}}
\label{app:proof_logdetmi_mahalanobis}

\begin{proof}
Under the covariance LogDet approximation, we replace the LogDet score
of a class by the log determinant of its covariance matrix. Thus,

\begin{equation}
I_{\mathrm{LD}}(C_a;C_b)
=
\log\det(\Sigma_a)
+
\log\det(\Sigma_b)
-
\log\det(\Sigma_{ab})
+
\mathrm{const},
\end{equation}

where $\Sigma_{ab}$ denotes the covariance of the union
$C_a\cup C_b$.

Let

\begin{equation}
\Delta
=
\mu_a-\mu_b.
\end{equation}

The covariance of the mixture $C_a\cup C_b$ is

\begin{equation}
\Sigma_{ab}
=
p\Sigma_a
+
q\Sigma_b
+
pq\Delta\Delta^T.
\end{equation}

Using the pooled within-class covariance

\begin{equation}
\Sigma_w
=
p\Sigma_a+q\Sigma_b,
\end{equation}

we can write

\begin{equation}
\Sigma_{ab}
=
\Sigma_w
+
pq\Delta\Delta^T.
\end{equation}

By the matrix determinant lemma,

\begin{equation}
\det(\Sigma_w+pq\Delta\Delta^T)
=
\det(\Sigma_w)
\left(
1+
pq\Delta^T\Sigma_w^{-1}\Delta
\right).
\end{equation}

Taking logarithms gives

\begin{equation}
\log\det(\Sigma_{ab})
=
\log\det(\Sigma_w)
+
\log
\left(
1+
pq\Delta^T\Sigma_w^{-1}\Delta
\right).
\end{equation}

By definition,

\begin{equation}
\mathcal{M}_{ab}
=
\Delta^T\Sigma_w^{-1}\Delta.
\end{equation}

Therefore,

\begin{align}
I_{\mathrm{LD}}(C_a;C_b)
&=
\log\det(\Sigma_a)
+
\log\det(\Sigma_b)
-
\log\det(\Sigma_w)
\\
&\quad
-
\log
\left(
1+
pq\mathcal{M}_{ab}
\right)
+
\mathrm{const}.
\end{align}

The first three terms depend only on the within-class covariance
structure and are independent of the mean separation. Absorbing them
into the constant with respect to $\mathcal{M}_{ab}$ yields

\begin{equation}
I_{\mathrm{LD}}(C_a;C_b)
=
\mathrm{const}
-
\log
\left(
1+
pq\mathcal{M}_{ab}
\right).
\end{equation}

Since the function
\begin{equation}
-\log(1+pqx)
\end{equation}
is strictly decreasing in $x$ for $p,q>0$, LogDetMI is a monotone
decreasing function of the Mahalanobis separation
$\mathcal{M}_{ab}$. Therefore, minimizing LogDetMI is equivalent, up to
monotone transformations, to maximizing covariance-normalized class
separation.
\end{proof}

\subsection{Proof of Theorem~\ref{thm:flqmi_modes}}
\label{app:proof_flqmi_modes}

\begin{proof}

Let the modes of $C_a$ and $C_b$ be
$\{C_{a,r}\}_{r=1}^{m_a}$ and
$\{C_{b,s}\}_{s=1}^{m_b}$ with corresponding mode centers
$\{\nu_{a,r}\}$ and $\{\nu_{b,s}\}$. Denote the mode cardinalities by

\begin{equation}
n_{a,r}=|C_{a,r}|,
\qquad
n_{b,s}=|C_{b,s}|.
\end{equation}

Assume each mode is $\rho$-compact, i.e.,

\begin{equation}
\|z_i-\nu_{a,r}\|\le \rho,
\qquad
\forall z_i\in C_{a,r},
\end{equation}

and similarly for every mode of $C_b$.

Recall that Facility Location Mutual Information is

\begin{equation}
I_{\mathrm{FL}}(C_a;C_b)
=
\sum_{i\in C_a}
\max_{j\in C_b}
s_{ij}
+
\sum_{j\in C_b}
\max_{i\in C_a}
s_{ij}.
\label{eq:flqmi_proof}
\end{equation}

Consider the first term. For any point
$z_i\in C_{a,r}$,

\begin{align}
\max_{j\in C_b}s_{ij}
&=
\max_{j\in C_b}
\exp\!\left(
-\frac{\|z_i-z_j\|^2}{2\tau^2}
\right) \\
&=
\exp\!\left(
-\frac{
\min_{j\in C_b}\|z_i-z_j\|^2
}{
2\tau^2
}
\right).
\end{align}

Let

\begin{equation}
s^*(r)
=
\arg\min_s
d_{rs},
\end{equation}

where

\begin{equation}
d_{rs}
=
\|\nu_{a,r}-\nu_{b,s}\|.
\end{equation}

Since every mode is compact,

\begin{equation}
d_{r,s^*(r)}-2\rho
\le
\min_{j\in C_b}
\|z_i-z_j\|
\le
d_{r,s^*(r)}+2\rho.
\end{equation}

Hence,

\begin{equation}
\max_{j\in C_b}
s_{ij}
=
\exp\!\left(
-\frac{d_{r,s^*(r)}^2}{2\tau^2}
\right)
\left(
1+o(1)
\right),
\end{equation}

as $\rho/\tau\rightarrow0$.

Summing over all points in mode $C_{a,r}$ yields

\begin{equation}
\sum_{i\in C_{a,r}}
\max_{j\in C_b}
s_{ij}
=
n_{a,r}
\exp\!\left(
-\frac{d_{r,s^*(r)}^2}{2\tau^2}
\right)
+
o(1).
\end{equation}

Summing over all modes,

\begin{equation}
\sum_{i\in C_a}
\max_{j\in C_b}
s_{ij}
=
\sum_{r=1}^{m_a}
n_{a,r}
\exp\!\left(
-\frac{d_{r,s^*(r)}^2}{2\tau^2}
\right)
+
o(1).
\label{eq:forward_fl}
\end{equation}

Applying the same argument in the reverse direction, define

\begin{equation}
r^*(s)
=
\arg\min_r
d_{rs}.
\end{equation}

Then,

\begin{equation}
\sum_{j\in C_b}
\max_{i\in C_a}
s_{ij}
=
\sum_{s=1}^{m_b}
n_{b,s}
\exp\!\left(
-\frac{d_{r^*(s),s}^2}{2\tau^2}
\right)
+
o(1).
\label{eq:reverse_fl}
\end{equation}

Combining Equations~(\ref{eq:forward_fl}) and
(\ref{eq:reverse_fl}) gives

\begin{align}
I_{\mathrm{FL}}(C_a;C_b)
&=
\sum_{r=1}^{m_a}
n_{a,r}
\exp\!\left(
-\frac{d_{r,s^*(r)}^2}{2\tau^2}
\right)
\\
&\quad+
\sum_{s=1}^{m_b}
n_{b,s}
\exp\!\left(
-\frac{d_{r^*(s),s}^2}{2\tau^2}
\right)
+
o(1).
\end{align}

Finally, regrouping the two sums according to the mode pair $(r,s)$ yields

\begin{equation}
I_{\mathrm{FL}}(C_a;C_b)
=
\sum_{r=1}^{m_a}
\sum_{s=1}^{m_b}
w_{rs}
\exp\!\left(
-\frac{d_{rs}^2}{2\tau^2}
\right)
+
o(1),
\end{equation}

where

\begin{equation}
w_{rs}
=
n_{a,r}\mathbf{1}\{s=s^*(r)\}
+
n_{b,s}\mathbf{1}\{r=r^*(s)\}.
\end{equation}

Thus, only nearest neighboring mode pairs receive nonzero weight in the approximation. Consequently, Facility Location Mutual Information is governed by nearest-mode overlap rather than aggregate overlap across all pairs of modes.
\end{proof}

\end{document}